\title[Scale-Free Adversarial MAB]{Scale-Free Adversarial Multi Armed Bandits}
\newcommand{\breg}{\text{Breg}}
\begin{document}

\maketitle

\begin{abstract}%
  We consider the  Scale-Free Adversarial Multi Armed Bandits(MAB) problem. At the beginning of the game, the player only knows the number of arms $n$.  It does not know the scale and magnitude of the losses chosen by the adversary or the number of rounds $T$. In each round, it sees bandit feedback about the loss vectors $l_1,\dots, l_T \in \mathbb{R}^n$. The goal is to bound its regret as a function of $n$ and norms of $l_1,\dots, l_T$. We design a bandit Follow The Regularized Leader (FTRL) algorithm, that uses an adaptive learning rate and give two different regret bounds, based on the exploration parameter used. With non-adaptive exploration, our algorithm has a regret of $\tilde{\mathcal{O}}(\sqrt{nL_2} + L_\infty\sqrt{nT})$ and with adaptive exploration, it has a regret of $\tilde{\mathcal{O}}(\sqrt{nL_2} + L_\infty\sqrt{nL_1})$. Here $L_\infty = \sup_t \| l_t\|_\infty$, $L_2 = \sum_{t=1}^T \|l_t\|_2^2$, $L_1 = \sum_{t=1}^T \|l_t\|_1$ and the $\tilde{\mathcal{O}}$ notation suppress logarithmic factors. These are the first MAB bounds that adapt to the $\|\cdot\|_2$, $\|\cdot\|_1$ norms of the losses. The second bound is the first data-dependent scale-free MAB bound as $T$ does not directly appear in the regret. We also develop a new technique for obtaining a rich class of local-norm lower-bounds for Bregman Divergences. This technique plays a crucial role in our analysis for controlling the regret when using importance weighted estimators of unbounded losses. This technique could be of independent interest.
\end{abstract}

\begin{keywords}%
  Multi Armed Bandit, Scale-Free Algorithm, FTRL, Adaptive FTRL
\end{keywords}

\section{Introduction}

The Adversarial Multi Armed Bandit(MAB) problem proceeds as a sequential game of $T$ rounds between a player and an adversary. In each round $t=1,\dots, T$, the player selects a distribution $p_t$ over the $n$-arms and the adversary selects a loss vector $l_t$ belonging to some set $\mathcal{L} \subseteq \mathbb{R}^n$. An action $i_t$ is sampled from $p_t$ and the player observes the loss $l_t(i_t)$. The (expected) regret of the player is:

$$R_T = \mathbb{E}\left[\sum_{t=1}^T l_t(i_t) - \min_{i \in [n]} \sum_{t=1}^T l_t(i)\right]$$

We assume that the adversary is oblivious, i.e., the loss vectors $l_1,\dots, l_T$ are chosen before the game begins. So, the above expectation is with respect to the randomness in the player's strategy. The goal of the player is to sequentially select the distributions $p_1,\dots,p_T$ such that $R_T$ is minimized. The adversarial MAB problem has been studied extensively; we refer the reader to the texts of  \cite{DBLP:journals/ftml/BubeckC12,lattimore_szepesvari_2020,DBLP:journals/ftml/Slivkins19} for further details. Assuming that $\mathcal{L}$ is bounded, and the $\|\cdot\|_\infty$-Lipschitz constant $G$ is known to the player in advance (i.e. $\sup_{l \in \mathcal{L}} \|l\|_\infty = G<\infty$), the minimax rate of regret is known to be $\Theta(G\sqrt{nT})$.  The Exp3 algorithm \citep{DBLP:journals/siamcomp/AuerCFS02} has a $\mathcal{O}(G\sqrt{nT\log(n)})$ regret bound whereas the Poly-INF algorithm \citep{DBLP:conf/colt/AudibertB09} removes the $\sqrt{\log(n)}$ factor, achieving the optimal $\mathcal{O}(G\sqrt{nT})$ regret bound.  Exp3 and Poly-INF use $G$ in tuning the learning rate, which helps them achieve a linear dependence on $G$.

In this paper, we address the case when the player has no knowledge of $\mathcal{L}$. We consider \textit{Scale-Free} bounds for MABs, which aim to bound the regret in terms of $n$ and norms of the loss vectors $l_1,\dots,l_T$ for any sequence of loss vectors chosen arbitrarily by adversary. Scale-free bounds have been studied in the \textit{full-information} setting (where the player sees the complete vector $l_t$ in each round). For the Experts problem, which is the full-information counterpart of adversarial MAB, the AdaHedge algorithm \citep{ DBLP:journals/jmlr/RooijEGK14} has a scale-free regret bound of $\mathcal{O}(\sqrt{\log(n)(\sum_{t=1}^T \|l_t\|_\infty^2)})$. For the same problem, the Hedge algorithm \citep{DBLP:journals/jcss/FreundS97} has a regret bound of $\mathcal{O}(G\sqrt{T\log(n)})$ with knowledge of $G$. The scale-free bound is more general as it holds for any $l_1,\dots,l_T \in \mathbb{R}^n$, whereas the bound achieved by the Hedge algorithm  only holds provided that $\sup_t \|l_t\|_\infty < G$ where $G$ needs to be known in advance. 

\subsection{Our Contributions}
We present an algorithm for the scale-free MAB problem. By appropriately 
setting the parameters of this algorithm, we can achieve a scale-free regret upper-bound of either $\tilde{\mathcal{O}}(\sqrt{nL_2} + L_\infty\sqrt{nT})$, or $\tilde{\mathcal{O}}(\sqrt{nL_2} + L_\infty\sqrt{nL_1})$. Here $L_\infty = \sup_t \| l_t\|_\infty$, $L_2 = \sum_{t=1}^T \|l_t\|_2^2$, $L_1 = \sum_{t=1}^T \|l_t\|_1$ and the $\tilde{\mathcal{O}}$ notation suppress logarithmic factors. Our algorithm is also \textit{any-time} as it does not need to know the number of rounds $T$ in advance. Assuming $\sup_t \|l_t\|_\infty < G$, our first regret bound achieves linear dependence on $G$ (sans the hidden logarithmic terms). This bound is only $\tilde{\mathcal{O}}(\sqrt{n})$ factor larger than Poly-INF's regret of $\mathcal{O}(G\sqrt{nT})$. The second bound is the first completely data-dependent scale-free regret bound for MABs as it has no direct dependence on $T$. Moreover, these are the first MAB bounds that adapt to the $\|\cdot\|_2$, $\|\cdot\|_1$ norms of the losses. The only previously known scale-free result for MABs was $\mathcal{O}(L_\infty \sqrt{nT\log(n)})$ by \citet{hadiji2020adaptation}, which adapts to the $\| \cdot \|_\infty$ norm and is not completely data-dependent due to the $T$ in their bound.


In the analysis, we present a novel and
general technique to obtain \textit{local-norm} lower-bounds for \textit{Bregman divergences} induced by a special class of functions that are commonly used in online learning. These local-norm lower-bounds can be used to obtain regret inequalities as shown in \citet[Corollary 28.8]{lattimore_szepesvari_2020}. We use our technique to obtain a full-information regret inequality that holds for any arbitrary sequence of losses and is particularly useful in the bandit setting due to its local-norm structure. This technique could be of independent interest.

\subsection{Related Work}
\textbf{Scale-Free Regret.} As mentioned earlier, Scale-Free regret bounds were studied in the full information setting. The AdaHedge algorithm from \citet{DBLP:journals/jmlr/RooijEGK14} gives a scale-free bound for the experts problem. The AdaFTRL algorithm from \cite{DBLP:journals/tcs/OrabonaP18} extends these bounds to the general online convex optimization problem. We rely on the analysis of AdaFTRL as presented in \cite{koolen_2016}. For the MAB problem,  \citet{hadiji2020adaptation} show a scale-free bound of $\mathcal{O}(L_\infty \sqrt{nT\log(n)})$, which is close to the $\mathcal{O}(G\sqrt{nT\log(n)})$ bound of Exp3. Our scale-free bounds are more versatile as they are able to adapt to additional structure in the loss sequence, such as the case of sparse losses with large magnitude, i.e., when $L_2<< L_\infty^2 nT$ and $L_1 << L_\infty nT$. Even in the worst-case, our bounds are a factor of $\tilde{\mathcal{O}}(\sqrt{n})$ and  $\tilde{\mathcal{O}}(\sqrt{nL_\infty})$ larger than their bound respectivley.
\\

\noindent \textbf{Data-dependent Regret.} These bounds use a ``measure of hardness" of the sequence of loss vectors instead of $T$. Algorithms that have a data-dependent regret bound perform better than the worst-case regret, when the sequence of losses is ``easy" according to the measure of hardness used. For instance, First-order bounds \citep{DBLP:conf/alt/AllenbergAGO06, DBLP:conf/nips/FosterLLST16, DBLP:conf/uai/PogodinL19}, also known as small-loss or $L^\star$ bounds depend on $L^\star = \min_{i \in [n]}\sum_{t=1}^T l_t(i)$. Bounds that depend on the empirical variance of the losses were shown in \cite{DBLP:journals/jmlr/HazanK11, DBLP:conf/alt/BubeckCL18}. Path length bounds that depend on $\sum_{t=1}^{T-1}\|l_t-l_{t+1}\|$ or a similar quantity appear in \cite{DBLP:conf/colt/WeiL18, DBLP:conf/colt/BubeckLLW19}.  \citet{DBLP:journals/jmlr/ZimmertS21} give an algorithm that adapts to any stochastictiy present in the losses. Our bound is comparable to a result in \cite{DBLP:conf/alt/BubeckCL18}, where they derive a regret bound depending on $\sum_{t=1}^T \|l_t\|_2^2$. However, all these results assume either $\mathcal{L} = [0,1]^n$ or $\mathcal{L} = [-1,1]^n$. 
\\

\noindent \textbf{Effective Range Regret.} The effective range of the loss sequence is defined as $\sup_{t,i,j}| l_t(i) -  l_t(j)|$.  \citet{DBLP:conf/nips/GerchinovitzL16} showed that it is impossible to adapt to the effective range in adversarial MAB. This result does not contradict the existence of scale-free bounds as the effective range could be much smaller than, for instance, the complete range $\sup_{t,s,i,j}| l_t(i) -  l_s(j)|$. In fact, \citet{hadiji2020adaptation} already show a regret bound that adapts to the complete range. We do note that under some mild additional assumptions, \citet{DBLP:conf/alt/Cesa-BianchiS18} show that it is possible to adapt to the effective range.

\subsection{Organization}
In Section \ref{sec:algorithm} we present the scale-free MAB algorithm (Algorithm \ref{alg:SF_MAB}) and its scale-free regret bound (Theorem \ref{thm:main}). Section \ref{sec:preliminaries} introduces Potential functions, based on which we build our analysis. Section \ref{subsec:BregmanLowerBound} shows a technique for obtaining local-norm lower-bounds for Bregman divergences. Section \ref{subsec:FTRL} briefly discusses full-information FTRL, AdaFTRL and in Theorem \ref{thm:LogBarrierRegret} we obtain a regret inequality for AdaFTRL with the log-barrier regularizer. Theorem \ref{thm:main} is proved in Section \ref{sec:proof}.

\subsection{Notation}
Let $\Delta_n$ be the probability simplex $\{p\in \mathbb{R}^n: \sum_{i=1}^n p(i) = 1, p(i)\geq 0, i\in [n]\}$.  Let $\textbf{1}^{i}$ be the vector with $\textbf{1}^{i}(i)=1$ and $\textbf{1}^{i}(j)=0$ for all $j\neq i$. For $\epsilon \in (0,1]$, let $\textbf{1}^i_\epsilon = (1-\epsilon) \textbf{1}^i + \epsilon/n$. The all ones and all zeros vector are denoted by $\textbf{1}$ and $\textbf{0}$ respectively. Let $H_t$ be the history from time-step $1$ to $t$, i.e., $H_t = \{l_1(i_1), l_2(i_2),\dots,l_t(i_t)\}$.

\section{Algorithm}
\label{sec:algorithm}

 Consider for a moment, full-information strategies on $\Delta_n$. In the full information setting, in each round $t$, the player picks a point $p_t \in \Delta_n$. Simultaneously, the adversary picks a loss vector $l_t \in \mathbb{R}^n$. The player incurs a loss of $l_t^\top p_t$ and (unlike the bandit setting) {\it sees the entire vector $l_t$.} A full-information strategy $\mathcal{F}$ takes as input a sequence of loss vectors $l_1,\dots,l_t$ and outputs the next iterate $p_{t+1} \in \Delta_n$. A MAB strategy $\mathcal{B}$ can be constructed from a full-information strategy $\mathcal{F}$ along with two other components as follows:
\begin{enumerate}
    \item A sampling scheme $\mathcal{S}$, which constructs a sampling distribution $p'_t$ from the current iterate $p_t$. An arm $i_t$ is then sampled from $p'_t$ and the loss $l_t(i_t)$ is revealed to the player.
    \item An estimation scheme $\mathcal{E}$, that constructs an estimate $\tilde{l}_t$ of the loss vector $l_t$ using $l_t(i_t)$ and $p_t$.
    \item A full-information strategy $\mathcal{F}$, which computes the next iterate $p_{t+1}$ using all the estimates $\tilde{l}_1,\dots,\tilde{l}_t$.
\end{enumerate}
In fact, most existing MAB strategies in the literature can be described in the above framework with different choices of ${\cal S}, {\cal E}, {\cal F}$.

A delicate balance needs to be struck between $\mathcal{S}, \mathcal{E}$ and $\mathcal{F}$ in order to achieve a good regret bound for $\mathcal{B}$. Suppose the best arm in hindsight is $i_\star = \arg\min_{i \in [n]} \sum_{t=1}^T l_t(i) $ The expected regret of MAB strategy $\mathcal{B}$ can be decomposed as follows:
\begin{align*}
     &\mathbb{E}\left[\sum_{t=1}^T (l_t(i_t) - l_t(i^\star))\right] = \mathbb{E}\left[\sum_{t=1}^T l_t^\top(p'_t - \textbf{1}^{i^\star})\right] =\mathbb{E}\left[\sum_{t=1}^T l_t^\top(p'_t - p_t)\right] + \mathbb{E}\left[\sum_{t=1}^T l_t^\top(p_t - \textbf{1}^{i^\star})\right] \\
     &= \underbrace{\mathbb{E}\left[\sum_{t=1}^T l_t^\top(p'_t - p_t)\right]}_{(1)} + \underbrace{\mathbb{E}\left[\sum_{t=1}^T (l_t-\tilde{l}_t^\top)(p_t - \textbf{1}^{i^\star})\right]}_{(2)}+
     \underbrace{\mathbb{E}\left[\sum_{t=1}^T \tilde{l}_t^\top(p_t - \textbf{1}^{i^\star})\right]}_{(3)}
\end{align*}
Term (1) is due to the sampling scheme $\mathcal{S}$, term (2) is the effect of the estimation scheme $\mathcal{E}$ and term (3) is the expected regret of the full-information strategy $\mathcal{F}$ on the loss sequence $\tilde{l}_1,\dots,\tilde{l}_T$ compared to playing the fixed strategy $\textbf{1}^{i^\star}$.
\\

\noindent \textbf{Sampling Scheme.} A commonly used sampling scheme mixes $p_t$ with the uniform distribution using a parameter $\gamma$, i.e., $p'_t = (1-\gamma)p_t + \gamma/n$. Such schemes were first introduced in the seminal work of \citet{DBLP:journals/siamcomp/AuerCFS02} and have remained a mainstay in MAB algorithm design. We use a time-varying $\gamma$, i.e., we pick $p'_t = (1-\gamma_{t-1})p_t + \gamma_{t-1}/n$. Here $\gamma_{t-1}$ could be any measurable function of $H_{t-1}$.
\\

\noindent \textbf{Estimation Scheme.} We use the \textit{Importance Weighted}(IW) estimator which was also introduced by \citet{DBLP:journals/siamcomp/AuerCFS02}. It computes $\tilde{l}_t$ as:
$$\tilde{l}_t = \frac{l_t(i_t)}{p'_t(i_t)} \textbf{1}^{i_t}$$
Since the sampling distribution is $p'_t$, the IW estimator is an unbiased estimate of $l_t$:
$$\mathbb{E}_{i_t \sim p'_t}[\tilde{l}_t] = \sum_{i_t=1}^n  p'_t(i_t)\frac{l_t(i_t)}{p'_t(i_t)} \textbf{1}^{i_t} = l_t$$
Note that $p_t$ is a measurable function of $H_{t-1}$. Using the tower rule and the fact that $\mathbb{E}_{i_t \sim p'_t}[\tilde{l}_t] = l_t$, we can see that term (2) is $0$.
\\

\noindent \textbf{Full-information startegy.} For $\mathcal{F}$, there is a large variety of full-information algorithms that one could pick from. Most if not all of them belong to one of the two principle families of algorithms: Follow The Regularized Leader(FTRL) or Online Mirror Descent(OMD). Further, one also has to choose a suitable \textit{regularizer} $F$ within these algorithms for the particular application at hand. We refer to \cite{DBLP:books/daglib/0016248, DBLP:journals/ftml/Shalev-Shwartz12, DBLP:journals/ftopt/Hazan16, DBLP:journals/corr/abs-1912-13213, DBLP:conf/alt/JoulaniGS17, DBLP:journals/tcs/JoulaniGS20} for a detailed history and comparison of these algorithms. The particular algorithm we use is FTRL with a $H_t$ measurable, adaptive learning rate $\eta_t$ that resembles the adaptive schemes in AdaHedge \citep{DBLP:journals/jmlr/RooijEGK14} and AdaFTRL \citep{DBLP:journals/tcs/OrabonaP18}. 

The regret of $\mathcal{F}$ has an component called the \text{stability} term $\Psi_{p}:\mathbb{R}^n \to \mathbb{R}$. In the bandit case, $\mathcal{F}$ receives the IW estimates $\tilde{l}_t$. So, it is important that the stability term be bounded with IW estimates. Without going into any technical details, we note that it is desirable to have a stability term bounded by $\Psi_p(l)  \leq p^\top l^2$ as its expectation with IW estimates can be bounded.

Previous techniques to bound the stability term by $p^\top l^2$ relied on the assumptions on $l$, such as either $l\geq \textbf{0}$ or $l\geq -\textbf{1}$ (See \cite[Page 5]{DBLP:journals/corr/abs-1907-05772}). For arbitrary $l \in \mathbb{R}^n$, we show that it is possible to bound the stability term by $p^\top l^2$ using the \textit{log-barrier} regularizer. The procedure we develop to obtain this bound is the main technical contribution of our paper.

 The complete algorithm for the scale-free MAB problem is described below. We give two choices for the exploration parameter $\gamma_t$. A simple non-adaptive scheme that is similar to the one in \cite{hadiji2020adaptation}, where $\gamma_t \propto \frac{1}{\sqrt{t}}$ and an adaptive scheme that picks $\gamma_t$ in a fashion that resembles the adaptive learning rate scheme $\eta_t$.

\begin{algorithm2e}
\caption{Scale-Free Multi Armed Bandit}
\label{alg:SF_MAB}
\DontPrintSemicolon
Starting Parameters: $\eta_0=n,\gamma_0=1/2$\;
Regularizer $\displaystyle F(q) = \sum_{i=1}^n (f(q(i)) - f(1/n))$, where $f(x) = -\log(x)$\;
First iterate $p_1 = (1/n,\dots,1/n)$\;
\For{$t = 1$ to $T$}{
Sampling Scheme: $\displaystyle  p'_t = (1-\gamma_{t-1})p_t + \frac{\gamma_{t-1}}{n}$\;
Sample Arm $i_t \sim p'_t$ and see loss $l_t(i_t)$.\;
Estimation Scheme: $\displaystyle  \tilde{l}_t = \frac{l_t(i_t)}{p'_t(i_t)} \textbf{1}^{i_t}$\;
Compute $\gamma_t$ for next step: \\(Option 1) Non-adaptive $\gamma_t = \min(1/2,\sqrt{n/t})$ \\(Option 2) Adaptive $\displaystyle \gamma_t = \frac{n}{2n + \sum_{s=1}^t \Gamma_s(\gamma_{s-1})}$ where $\displaystyle \Gamma_t(\gamma) = \frac{\gamma |l_t(i_t)|}{(1-\gamma) p_t(i_t) + \gamma/n}$\;
Compute  $\displaystyle  \eta_t = \frac{n}{1+\sum_{s=1}^t M_s(\eta_{s-1})} $ where $\displaystyle  M_t(\eta) = \sup_{q \in \Delta_n} \left[ \tilde{l}_t^\top (p_t-q) - \frac{1}{\eta} \breg_{F}(q\|p_t) \right]$\;
Find next iterate using FTRL:
$\displaystyle  p_{t+1} = \arg \min_{q \in \Delta_n} \left[ F(q) + \eta_{t} \sum_{s=1}^{t}  q^\top \tilde{l}_s \right]$
}
\end{algorithm2e}
Our main result is the following regret bound for Algorithm \ref{alg:SF_MAB}.

\begin{restatable}{theorem}{main}
\label{thm:main}
For any $l_1,\dots,l_T \in \mathbb{R}^n$, the expected regret of Algorithm \ref{alg:SF_MAB} is at most:
\begin{enumerate}
    \item  $\tilde{\mathcal{O}}(\sqrt{nL_2} + L_\infty\sqrt{nT})$ if $\gamma_t$ is non-adaptive (Option 1) and $T\geq 4n$
    \item  $\tilde{\mathcal{O}}(\sqrt{nL_2} + L_\infty\sqrt{nL_1})$ if $\gamma_t$ is adaptive (Option 2)
\end{enumerate}
Where $L_\infty = \max_t \| l_t\|_\infty$, $L_2 = \sum_{t=1}^T \|l_t\|_2^2$, $L_1 = \sum_{t=1}^T \|l_t\|_1$.
\end{restatable}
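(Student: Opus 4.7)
The plan is to follow the three-term regret decomposition displayed in Section~\ref{sec:algorithm}: write the expected regret as (1) the sampling error $\mathbb{E}[\sum_t l_t^\top (p'_t - p_t)]$, (2) the estimation bias $\mathbb{E}[\sum_t (l_t - \tilde{l}_t)^\top (p_t - \mathbf{1}^{i^\star})]$, and (3) the full-information FTRL regret $\mathbb{E}[\sum_t \tilde{l}_t^\top (p_t - \mathbf{1}^{i^\star})]$. Term (2) vanishes because $p_t$ is $H_{t-1}$-measurable and the IW estimator is conditionally unbiased. The remainder of the proof bounds terms (1) and (3) separately. Term (3) is handled identically under both options, while term (1) splits into an elementary deterministic computation under Option 1 and an AdaFTRL-style adaptive argument under Option 2.

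For term (3), I would apply the AdaFTRL regret inequality for the log-barrier regularizer established in Theorem~\ref{thm:LogBarrierRegret}, using the comparator $\mathbf{1}^{i^\star}_\epsilon$ (for some small $\epsilon$ of order $1/T$) so that $F(\mathbf{1}^{i^\star}_\epsilon) = O(n \log(nT))$ remains finite. This produces a data-dependent bound of the form $\tilde{O}\bigl(\sqrt{n \cdot \mathbb{E}[\sum_t M_t(\eta_{t-1})]}\bigr)$ together with lower-order log terms. The stability sum is then controlled using the new local-norm lower bound on the log-barrier Bregman divergence developed in Section~\ref{subsec:BregmanLowerBound}, which yields $M_t(\eta) \leq C\,\eta\, p_t^\top \tilde{l}_t^{\,2}$ for arbitrary real-valued $l_t$. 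Conditioning on $H_{t-1}$, taking expectation over $i_t \sim p'_t$, and using $p'_t(i) \geq (1-\gamma_{t-1})\, p_t(i) \geq p_t(i)/2$ give
\[
\mathbb{E}\!\left[p_t^\top \tilde{l}_t^{\,2} \mid H_{t-1}\right] \;=\; \sum_i \frac{p_t(i)\, l_t(i)^2}{p'_t(i)} \;\leq\; 2\|l_t\|_2^2,
\]
so $\mathbb{E}[\sum_t M_t(\eta_{t-1})] \lesssim \eta\, L_2$. Pulling expectation through the square root via Jensen produces an $\tilde{O}(\sqrt{n L_2})$ bound on term~(3) under either option.

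For term (1), I use $l_t^\top (p'_t - p_t) = -\gamma_{t-1}\bigl(l_t^\top p_t - l_t^\top \mathbf{1}/n\bigr)$, so $|l_t^\top (p'_t - p_t)| \leq 2\gamma_{t-1} \|l_t\|_\infty \leq 2\gamma_{t-1} L_\infty$. Under Option~1 the deterministic choice $\gamma_{t-1} \leq \sqrt{n/(t-1)}$ (with the $1/2$ cap absorbed by the $T \geq 4n$ hypothesis) immediately yields $\sum_{t=1}^T \gamma_{t-1} = O(\sqrt{nT})$, contributing the $L_\infty \sqrt{nT}$ term. Under Option~2 the adaptive rule $\gamma_t = n/(2n + \sum_{s\leq t}\Gamma_s(\gamma_{s-1}))$ is designed to mirror the AdaFTRL $\eta_t$ update, and a Koolen--Orabona-style square-root identity applied to the sequence $\Gamma_t(\gamma_{t-1})$, combined with $\mathbb{E}[\Gamma_t \mid H_{t-1}] = \gamma_{t-1}\|l_t\|_1$ (since $\mathbb{E}_{i_t \sim p'_t}[|l_t(i_t)|/p'_t(i_t)] = \|l_t\|_1$) and Jensen, yields the desired $\tilde{O}(L_\infty \sqrt{nL_1})$ bound after upper bounding $\|l_t\|_\infty$ by $L_\infty$ where appropriate.

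The main obstacle is obtaining the local-norm stability bound $M_t(\eta) \leq C\,\eta\, p_t^\top \tilde{l}_t^{\,2}$ for the log-barrier regularizer \emph{without} any non-negativity assumption on $l_t$; the naive Hessian-based argument gives only the weaker quantity $\sum_i p_t(i)^2 \tilde{l}_t(i)^2$, which fails to be summable in expectation because it lacks the compensating $p'_t(i)$ in the denominator. The novel Bregman lower bound of Section~\ref{subsec:BregmanLowerBound} is precisely what delivers the stronger inequality, and plugging it in is the central technical step. A secondary obstacle is commuting expectations with square roots through the $H_{t-1}$-measurable random quantities $\eta_{t-1}$ and $\gamma_{t-1}$, which is handled by the standard AdaFTRL Jensen argument.
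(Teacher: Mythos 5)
Your overall route is the paper's: the same decomposition (up to where the $\epsilon$-smoothed comparator is inserted), conditional unbiasedness killing the estimation term, Theorem \ref{thm:LogBarrierRegret} with the log-barrier local-norm bound $M_t(\eta)\leq \frac{\eta}{2}\,p_t^\top \tilde{l}_t^{\,2}$ for the full-information part, Jensen to reach $\sqrt{nL_2}$, and an AdaHedge-style treatment of $\sum_t \Gamma_t(\gamma_{t-1})$ for the adaptive exploration cost. However, there is a genuine gap: you dismiss the additive $\sup_t\|\tilde{l}_t\|_\infty$ terms in the AdaFTRL inequality as ``lower-order log terms.'' For importance-weighted estimates, $\sup_t\|\tilde{l}_t\|_\infty = \sup_t |l_t(i_t)|/p'_t(i_t)$ is controlled only by the exploration floor, giving $\sup_t\|\tilde{l}_t\|_\infty \leq nL_\infty/\gamma_T$, and this is a \emph{leading-order} contribution. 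Under Option 1 it equals $2L_\infty\sqrt{nT}$, the same order as the exploration cost you do account for; under Option 2 it is the term $\frac{2nL_\infty(\log(1/\epsilon)+1)}{\gamma_T}$, which must be analyzed \emph{jointly} with $\sum_t\Gamma_t(\gamma_{t-1})$ via Lemma \ref{lem:summation_lemma} (whose whole purpose is to bound $A/a_T+\sum_t M_t(a_{t-1})$ together) before taking expectations and applying Jensen to obtain $L_\infty\sqrt{nL_1}$. Your sketch attributes the entire $L_\infty\sqrt{nT}$ (resp.\ $L_\infty\sqrt{nL_1}$) term to the sampling error $\sum_t\gamma_{t-1}\|l_t\|_\infty$ alone; if that were the only role of $\gamma$, one could take $\gamma\equiv 0$ and obtain a bound with no such term, which is false. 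The missing $1/\gamma_T$ term is the other side of the exploration trade-off; it is what forces $\gamma_t\asymp\sqrt{n/t}$ in Option 1 and dictates the particular adaptive rule for $\gamma_t$ in Option 2, so the proof does not close without it.

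A secondary imprecision: the data-dependent quantity produced by Theorem \ref{thm:LogBarrierRegret} is $\sqrt{\sum_t p_t^\top\tilde{l}_t^{\,2}}$, i.e.\ $\sqrt{\sum_t M_t(\eta_{t-1})/\eta_{t-1}}$, not $\sqrt{n\,\mathbb{E}[\sum_t M_t(\eta_{t-1})]}$; since $\eta_{t-1}$ can be as large as $n$, the latter would cost an extra $\sqrt{n}$. This is easily repaired, but the omission of the $nL_\infty/\gamma_T$ term above is the substantive issue.
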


\section{Preliminaries}
We begin by recalling a few definitions.
\label{sec:preliminaries}

\begin{definition}[Legendre function]
A continuous function $F:\mathcal{D} \to \mathbb{R}$ is Legendre if $F$ is strictly convex, continuously differentiable  on $\text{Interior}(\mathcal{D})$ and  $\lim_{x \to \mathcal{D}/\text{Interior}(\mathcal{D})} \|\nabla F(x)\| = +\infty$.
\end{definition}
For instance, the function $x\log(x)-x$, $-\sqrt{x}$, $-\log(x)$ are all Legendre on $(0,\infty)$

\begin{definition}[Bregman Divergence]
The Bregman Divergence of function $F$ is:

$$\breg_F(x\|y) = F(x)-F(y) - \nabla F(y)^\top (x-y).$$
\end{definition}

\begin{definition}[Potential Function] A function $\psi: (-\infty,a) \to (0,+\infty)$ for some $a \in \mathbb{R} \cup \{+\infty\}$ is called a Potential if it is convex, strictly increasing, continuously differentiable and satisfies:
    $$\lim_{x \to -\infty} \psi(x) = 0 \quad\text{ and } \quad \lim_{x \to a} \psi(x) = +\infty$$
\end{definition}

\begin{figure}[h]
        \centering
        \includegraphics[width=0.5\textwidth]{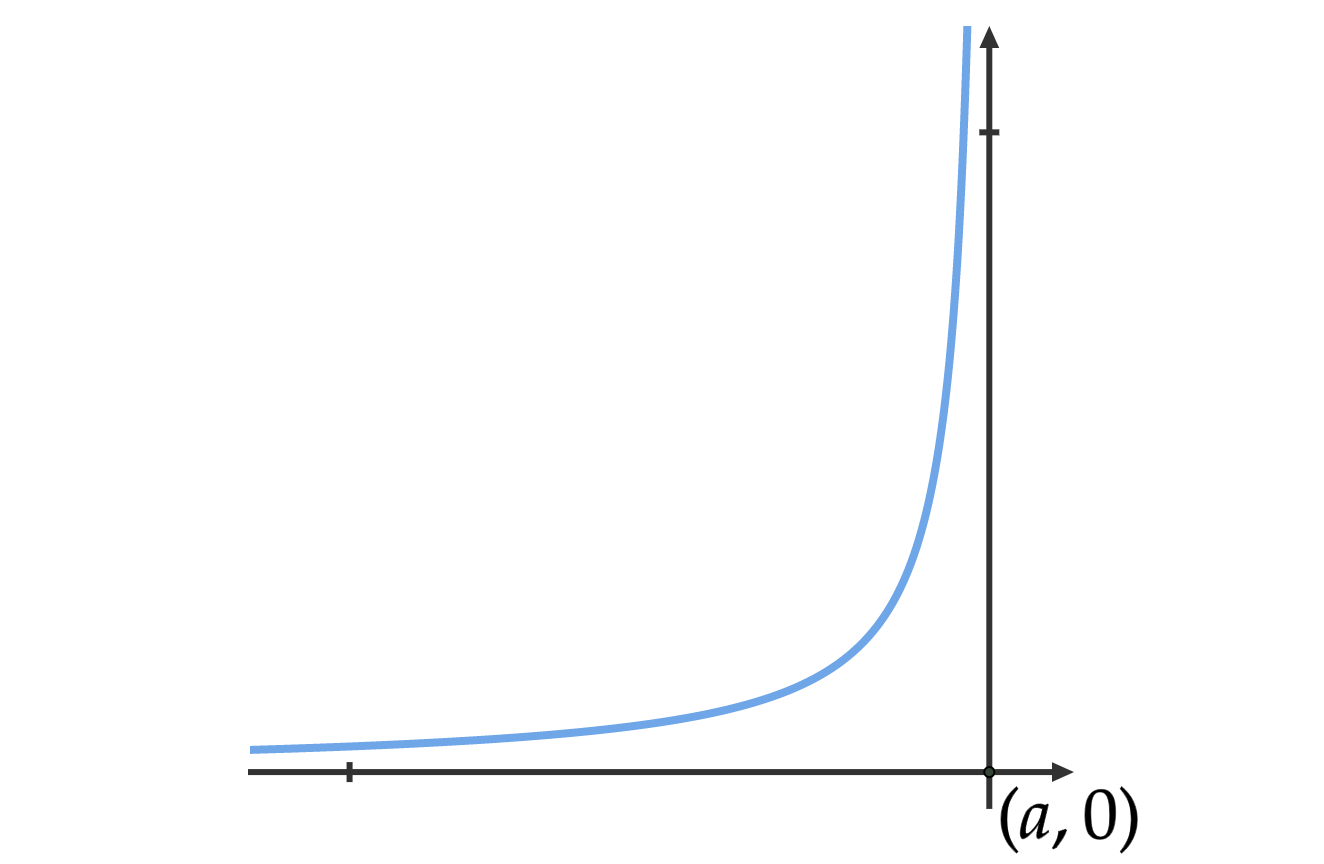}
        \caption{Potential Function}
        \label{fig:potential}
\end{figure}

For instance, $\exp(x)$ is a potential with $a=\infty$ and $-1/x$ is a potential with $a=0$. A potential function typically looks like Figure \ref{fig:potential}. Potentials were introduced in \citet{DBLP:conf/colt/AudibertB09, DBLP:journals/jmlr/AudibertBL11, DBLP:journals/mor/AudibertBL14} for analyzing the Implicitly Normalized Forecaster(INF) algorithm, of which Poly-INF is a specific case. 

Associated with a potential $\psi$, we define a function $f_\psi$ as the indefinite integral $f_\psi(z) = \int \psi^{-1}(z) dz + C$. Since the domain of $\psi^{-1}$ is $(0,\infty)$, the domain of $f_\psi$ is also $(0,\infty)$. For instance, if $\psi(x) = -1/x$ on the domain $(-\infty,0)$, the associated function is $f_\psi(x) = -\log(x) + C$.

Observe that $f_\psi'(z) = \psi^{-1}(z)$ and $f_\psi''(z) = \left[ \psi'(\psi^{-1}(z))\right]^{-1}$. Since $\psi$ is strictly convex and increasing, $\psi'>0$ and thus $f''_\psi >0$, making $f_\psi$ strictly convex. Moreover, $\lim_{z \to 0}\mid f_\psi'(z)\mid = \lim_{z \to 0}\mid \psi^{-1}(z)\mid = +\infty$. Thus $f_\psi$ is a Legendre function on $(0,\infty)$. Define the function $F_\psi:\mathbb{R}^n \to \mathbb{R}$ as $F_\psi(x) = \sum_{i=1}^n [f_\psi(x(i)) - f_\psi(1/n)]$. This function is Legendre on $(0,\infty)^n$.

Given a potential $\psi:(-\infty,a) \to (0,+\infty)$ and its associated function $f_\psi$, the Legendre-Fenchel dual of $f_\psi$ is $f_\psi^\star:(-\infty,a) \to  \mathbb{R}$ defined as $f_\psi^\star(u) = \sup_{z > 0}(zu - f_\psi(z))$. The supremum is achieved at $z={f'_\psi}^{-1}(u)=\psi(u)$. So we have that $f_\psi^\star(u) = u \psi(u) - f_\psi(\psi(u))$. This implies $ {f_\psi^\star}'(u) = \psi(u) $ and ${f_\psi^\star}''(u) = \psi'(u)$. Further, using integration by parts on $\int \psi(u) du$ and substituting $\psi(u)=s$:
$$
    \int \psi(u) du = u\psi(u) -\int u \psi'(u) du= u\psi(u) - \int \psi^{-1}(s)ds = u\psi(u)-f_\psi(\psi(u)) + C = f^\star_\psi(u) + C
$$
Thus $f^\star_\psi(u) =  \int \psi(u) du - C$. Here $C$ is the same constant of integration picked when defining $f_\psi(z) = \int \psi^{-1}(z) dz + C$. We have the following property (proof in Appendix \ref{app:potentials}):

\begin{restatable}{Lemma}{bregtransform}
\label{lem:breg_transform}
Let $x,y$ be such that $x=\psi(u)$ and $y=\psi(v)$. Then $\breg_{f_\psi}(y\|x) = \breg_{f_\psi^\star}(u\|v)$
\end{restatable}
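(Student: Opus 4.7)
The plan is to expand both Bregman divergences from their definitions and show they are equal by invoking the Fenchel duality relation between $f_\psi$ and $f_\psi^\star$. The central identities I will rely on, all established in the paragraph preceding the lemma, are $f_\psi'(z) = \psi^{-1}(z)$, ${f_\psi^\star}'(u) = \psi(u)$, and the Fenchel identity $f_\psi^\star(u) = u\psi(u) - f_\psi(\psi(u))$ coming from the fact that the supremum in the conjugate is attained at $z = \psi(u)$.

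First I would rewrite the two terms using the assumption $x = \psi(u)$, $y = \psi(v)$. Since $f_\psi'(x) = \psi^{-1}(x) = u$, the left-hand side is
\begin{equation*}
\breg_{f_\psi}(y\|x) = f_\psi(y) - f_\psi(x) - u(y-x).
\end{equation*}
Similarly, since ${f_\psi^\star}'(v) = \psi(v) = y$, the right-hand side is
\begin{equation*}
\breg_{f_\psi^\star}(u\|v) = f_\psi^\star(u) - f_\psi^\star(v) - y(u-v).
\end{equation*}

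Next I would apply the Fenchel identity in the form $f_\psi(x) = ux - f_\psi^\star(u)$ and $f_\psi(y) = vy - f_\psi^\star(v)$ (valid because $x = \psi(u)$ and $y = \psi(v)$ are the maximizers in the conjugate). Substituting these into the expression for $\breg_{f_\psi}(y\|x)$ gives
\begin{equation*}
\breg_{f_\psi}(y\|x) = \bigl(vy - f_\psi^\star(v)\bigr) - \bigl(ux - f_\psi^\star(u)\bigr) - u(y-x) = y(v-u) + f_\psi^\star(u) - f_\psi^\star(v),
\end{equation*}
where the $ux$ and $-ux$ terms cancel. This is exactly $\breg_{f_\psi^\star}(u\|v)$ as computed above, finishing the proof.

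I do not expect a real obstacle here: the argument is purely algebraic once the Fenchel identity is in place, and all the needed derivatives have already been computed in the excerpt. The only point to be careful about is ensuring that $x,y$ lie in the range of $\psi$ so that $u = \psi^{-1}(x)$ and $v = \psi^{-1}(y)$ are well-defined, which is precisely the hypothesis of the lemma.
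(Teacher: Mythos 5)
Your proof is correct and follows essentially the same route as the paper: both expand the Bregman divergences, use $f_\psi'(\psi(u))=u$ and ${f_\psi^\star}'(v)=\psi(v)=y$, and invoke the Fenchel identity $f_\psi^\star(u)=u\psi(u)-f_\psi(\psi(u))$ to convert one side into the other. The only cosmetic difference is that you compute both sides separately and match them, while the paper rewrites the left-hand side in a single chain of equalities.
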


\section{New local-norm lower-bounds for Bregman divergences}
\label{subsec:BregmanLowerBound}

Let $\psi$ be a potential and $x,y \in \mathbb{R}_+$. We show a general way of obtaining lower-bounds using potential functions, that are of the form:
$$\breg_{f_\psi}(y\|x) \geq \frac{1}{2 w(x)}(x-y)^2$$
Where $w$ is some positive function.

\begin{lemma}
\label{lem:lowerbound}
Let $\psi$ be a potential and $x\in \mathbb{R}_+$ such that $x=\psi(u)$ for some $u$. Let $\phi$ be a non-negative function such that $\psi(u+\phi(u))$ exists. Define the function $m(z) = \frac{\psi(z+\phi(z))-\psi(z)}{\phi(z)}$. For all $0< y \leq \psi(u+\phi(u))$ we have the lower bound: 
$\breg_{f_\psi}(y\|x) \geq \frac{1}{2} \frac{(x-y)^2}{m(\psi^{-1}(x))}$
\end{lemma}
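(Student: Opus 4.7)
The plan is to first invoke Lemma~\ref{lem:breg_transform} to rewrite $\breg_{f_\psi}(y\|x)=\breg_{f^\star_\psi}(u\|v)$, where $v=\psi^{-1}(y)$, because the dual divergence is structurally simpler: ${f^\star_\psi}'=\psi$ and ${f^\star_\psi}''=\psi'$, and Taylor's theorem with integral remainder gives $\breg_{f^\star_\psi}(u\|v)=\int_v^u(u-r)\psi'(r)\,dr$, valid regardless of the relative order of $u$ and $v$. I would then split into the cases $v\le u$ and $v>u$ and use different facets of convexity in each; note that the hypothesis $y\le\psi(u+\phi(u))$ is automatic whenever $v\le u$ (Case~1, since $\psi$ is increasing and $\phi\ge 0$) and equivalent to $v\le u+\phi(u)$ whenever $v>u$ (Case~2).

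For Case~1 ($v\le u$), I would change variables $s=\psi(r)$ to write $\breg_{f_\psi}(y\|x)=\int_y^x(\psi^{-1}(x)-\psi^{-1}(s))\,ds$ and then express the inner difference as $\psi^{-1}(x)-\psi^{-1}(s)=\int_s^x \frac{dt}{\psi'(\psi^{-1}(t))}$. Convexity of $\psi$ (monotonicity of $\psi'$) gives $\psi'(\psi^{-1}(t))\le\psi'(u)$ because $\psi^{-1}(t)\le u$ on the relevant interval, and a second use of convexity on $[u,u+\phi(u)]$ (chord above tangent at the left endpoint) gives $\psi'(u)\le m(u)$. Substituting and evaluating the iterated integral yields $\breg_{f_\psi}(y\|x)\ge\frac{(x-y)^2}{2m(u)}$.

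Case~2 ($v>u$) is the main obstacle, because on $[u,v]\subseteq[u,u+\phi(u)]$ the derivative $\psi'$ can exceed $m(u)$ (at the right endpoint, convexity forces $\psi'(u+\phi(u))\ge m(u)$), so the Case~1 strategy of uniformly bounding $\psi'$ above by $m(u)$ fails. Instead, I would integrate by parts in the opposite direction: $\breg_{f^\star_\psi}(u\|v)=\int_u^v(r-u)\psi'(r)\,dr=(v-u)\psi(v)-\int_u^v\psi(r)\,dr$. Convexity of $\psi$ says its graph lies below the chord between $(u,\psi(u))$ and $(v,\psi(v))$ on $[u,v]$, so $\int_u^v\psi(r)\,dr\le(v-u)\frac{\psi(u)+\psi(v)}{2}$, yielding $\breg_{f^\star_\psi}(u\|v)\ge\frac{(v-u)(y-x)}{2}$. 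Finally, because $v\le u+\phi(u)$, convexity gives the chord-slope comparison $\frac{\psi(v)-\psi(u)}{v-u}\le m(u)$, i.e.\ $v-u\ge\frac{y-x}{m(u)}$, and combining these two inequalities completes Case~2. The hypothesis on $y$ thus enters precisely to keep the chord from $u$ to $v$ within the reach of $m(u)$.
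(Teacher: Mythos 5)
Your proof is correct, and while it opens the same way as the paper's (reduce to the dual divergence via Lemma~\ref{lem:breg_transform} and an integral representation of $\breg_{f_\psi^\star}(u\|v)$), the core of your argument is genuinely different: the paper's proof is geometric, inscribing a right triangle of height $|x-y|$ and base $|x-y|/m(u)$ inside the region whose area equals $\breg_{f_\psi^\star}(u\|v)$, and reading off the bound as a triangle area, whereas you replace the picture with explicit convexity inequalities. In Case~1 you bound $u-\psi^{-1}(s)=\int_s^x \bigl[\psi'(\psi^{-1}(t))\bigr]^{-1}dt$ from below using $\psi'(\psi^{-1}(t))\le\psi'(u)\le m(u)$, which is exactly the analytic content of ``the curve lies above the $m(u)$-sloped line through $(u,x)$''; in Case~2 your trapezoid bound $\int_u^v\psi\le(v-u)\tfrac{\psi(u)+\psi(v)}{2}$ together with the increasing-chord-slope inequality $\tfrac{y-x}{v-u}\le m(u)$ actually passes through the sharper intermediate estimate $\breg_{f_\psi^\star}(u\|v)\ge\tfrac{1}{2}(v-u)(y-x)$, a larger triangle than the paper's (base $v-u$ rather than $(y-x)/m(u)$). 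What your route buys is a fully rigorous, picture-free derivation that makes explicit where each hypothesis enters (in particular that $y\le\psi(u+\phi(u))$ is needed only to keep $v$ within reach of the chord defining $m(u)$); what the paper's route buys is brevity and the visual intuition that motivates the whole construction. The only shared blemish is the degenerate case $\phi(u)=0$, where $m(u)$ is formally $0/0$ and should be read as $\psi'(u)$; neither proof addresses it, and both go through with that convention.
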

\begin{proof}
Let $v$ be such that $y=\psi(v)$. Using Lemma \ref{lem:breg_transform}, we have $\breg_{f_\psi}(y\|x) = \breg_{f_\psi^\star}(u\|v)$. Using the fact that $f_\psi^\star(u) = \int \psi(u) du - C$, we have:

\begin{align*}
    \breg_{f_\psi^\star}(u\|v) &= f_\psi^\star(u)-f_\psi^\star(v)-{f_\psi^\star}'(v)(u-v) = \int_{v}^u \psi(s) - y(u-v)
\end{align*}

We can visualize $\breg_{f_\psi^\star}(u\|v)$ using the potential function. When $v\leq u$, it is the area with green borders in Figure \ref{fig:case1} and when $u\leq v$, it is the area with green borders in Figure \ref{fig:case2}.

\begin{figure}[h]
    \begin{minipage}{0.5\textwidth}
        \centering
        \includegraphics[width=\textwidth]{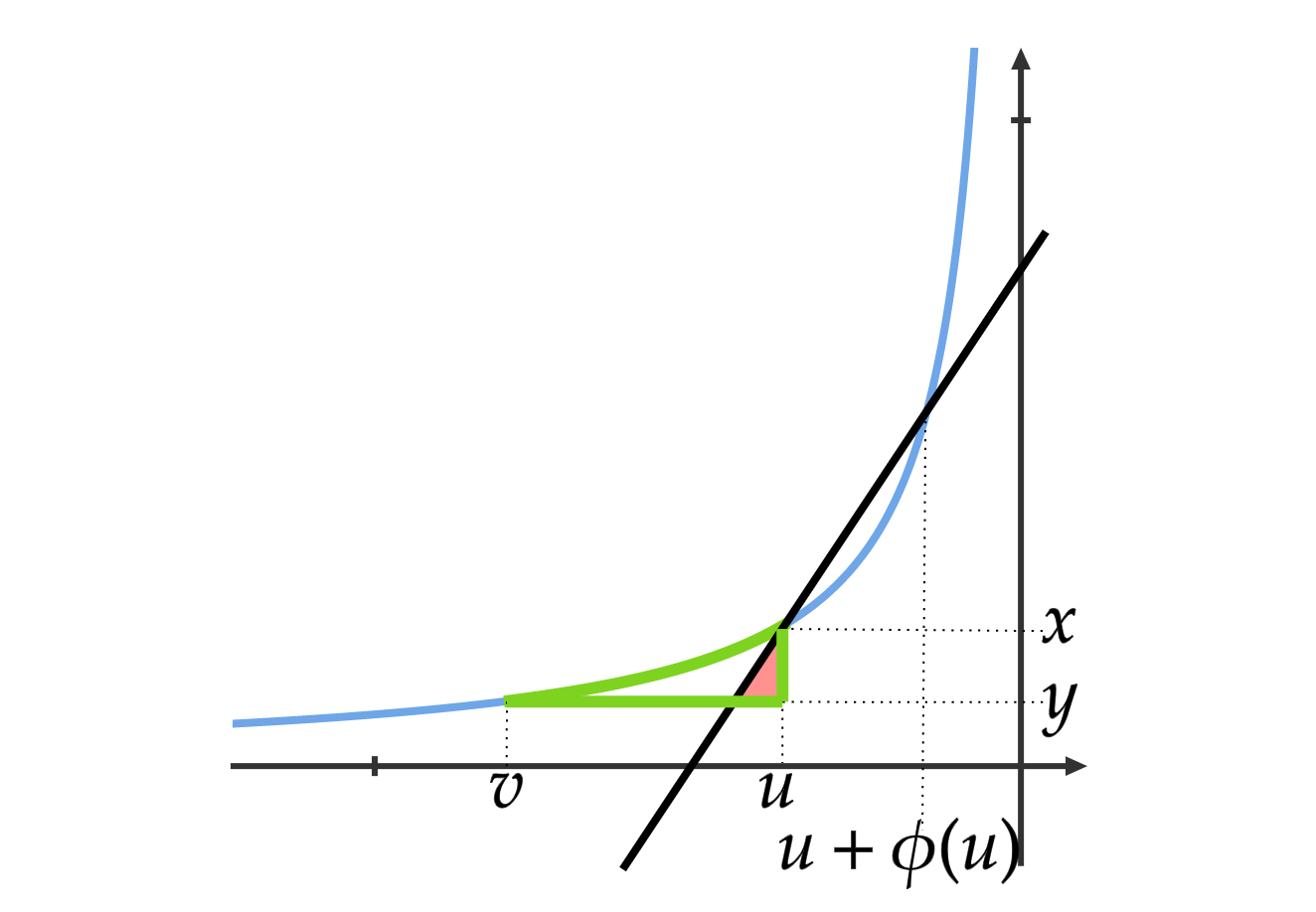} 
        \caption{$v\leq u$}
        \label{fig:case1}
    \end{minipage}%
    \begin{minipage}{0.5\textwidth}
        \centering
        \includegraphics[width=\textwidth]{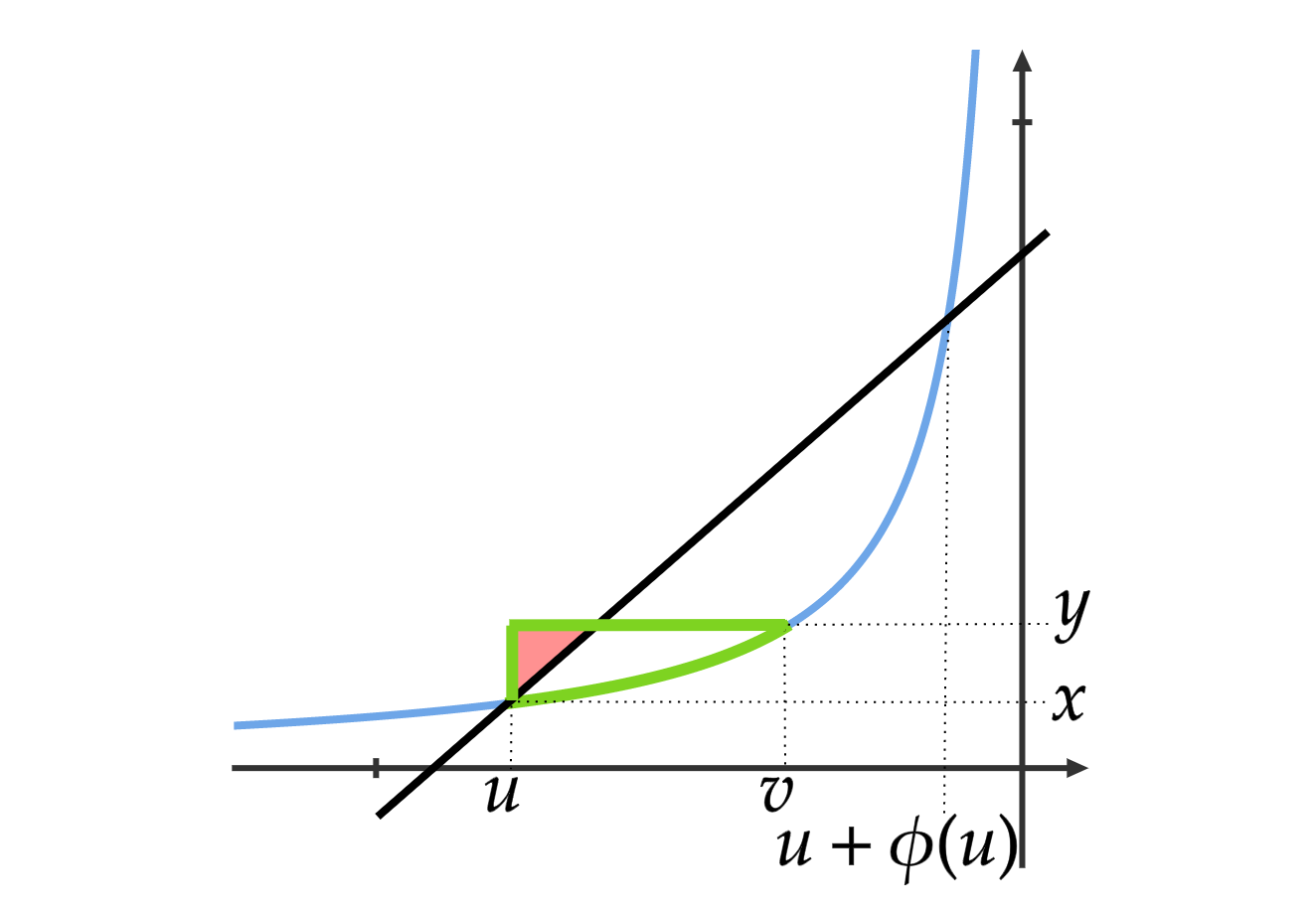} 
        \caption{$u\leq v\leq u+\phi(u)$}
        \label{fig:case2}
    \end{minipage}
\end{figure}

Consider the line passing through $(u,x)$ and $(u+\phi(u),\psi(u+\phi(u))$. Its slope is $m(u) \geq \psi'(u)> 0$. In both cases, the height of the red triangle is $|x-y|$ and its base is $\frac{|x-y|}{m(u)}$. So, the area of the red triangle will be $\frac{1}{2} \frac{(x-y)^2}{m(u)}$. Since the triangle is always smaller than $\breg_{f_\psi^\star}(u\|v)$, we have the lower bound $\breg_{f_\psi}(y\|x) \geq \frac{1}{2} \frac{(x-y)^2}{m(\psi^{-1}(x))}$.
\end{proof}

In the context of online learning, local-norm lower-bounds have been studied before, see for example \cite{DBLP:journals/corr/abs-1912-13213}. However, these relied upon Taylor's theorem to show that $\breg_{f_\psi}(y\|x) = \frac{1}{2}(x-y)^2 f''_\psi(z)$ for some $z \in [x,y]$. Then, they used further conditions on $x,y$ to argue that $c f''_\psi(x) \leq f''_\psi(z)$ for some positive constant $c$ and thus arrive at $\breg_{f_\psi}(y\|x) \geq \frac{c}{2}(x-y)^2 f''_\psi(x)$. We generalize this argument in Lemma \ref{lem:lowerbound}, through which we are able to generate a more rich class of lower-bounds. We illustrate with an example below:

\begin{corollary}
\label{cor:LogBarrierLowerbound}
Let $\psi(u)=-1/u$ in the domain $(-\infty,0)$. For $x,y \in (0,1]$, we have the lower-bound $$\breg_{f_\psi}(y\|x) = \frac{y}{x}-1-\ln\left( \frac{y}{x} \right)\geq \frac{1}{2} \frac{(x-y)^2}{x}$$
\end{corollary}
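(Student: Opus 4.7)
The plan is to apply Lemma \ref{lem:lowerbound} with a specific, carefully chosen shift function $\phi$. First I would verify the closed form of $\breg_{f_\psi}(y\|x)$. Since $\psi(u) = -1/u$ maps $(-\infty,0)$ onto $(0,\infty)$ with inverse $\psi^{-1}(z) = -1/z$, we get $f_\psi(z) = \int \psi^{-1}(z)\,dz + C = -\log z + C$, from which $\breg_{f_\psi}(y\|x) = -\log y + \log x + (1/x)(y-x) = y/x - 1 - \log(y/x)$ follows by direct substitution.

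Next I would choose $\phi$ so that the range condition $0 < y \leq \psi(u+\phi(u))$ in Lemma \ref{lem:lowerbound} covers the entire hypothesis $y \in (0,1]$. The cleanest choice is to force $\psi(u+\phi(u)) = 1$, i.e.\ $u+\phi(u) = -1$. Since $x \in (0,1]$ gives $u = \psi^{-1}(x) = -1/x \leq -1$, setting $\phi(u) := -1 - u$ is non-negative and admissible. Then $\psi(u+\phi(u)) = \psi(-1) = 1 \geq y$, and the hypothesis is met.

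With this $\phi$, the secant-slope function $m$ from Lemma \ref{lem:lowerbound} becomes
\begin{equation*}
m(u) \;=\; \frac{\psi(-1) - \psi(u)}{-1 - u} \;=\; \frac{1 + 1/u}{-(1+u)} \;=\; -\frac{1}{u},
\end{equation*}
so that $m(\psi^{-1}(x)) = -1/u = x$. Plugging into Lemma \ref{lem:lowerbound} gives
\begin{equation*}
\breg_{f_\psi}(y\|x) \;\geq\; \frac{1}{2}\cdot\frac{(x-y)^2}{m(\psi^{-1}(x))} \;=\; \frac{(x-y)^2}{2x},
\end{equation*}
which is exactly the claimed inequality.

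There is no real obstacle here; the proof is essentially a one-step instantiation of Lemma \ref{lem:lowerbound}. The only design choice is picking $\phi$, and the hypothesis $y \leq 1$ in the corollary is the clue: it signals that $\phi$ should be chosen so that $u + \phi(u)$ lands precisely at the preimage $\psi^{-1}(1) = -1$. Once this is recognized, the computation of $m$ reduces to an elementary algebraic simplification.
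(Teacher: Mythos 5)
Your proof is correct and follows essentially the same route as the paper: both choose $\phi(u) = -1 - u$ so that $\psi(u+\phi(u)) = 1$, compute the secant slope $m(u) = -1/u = x$, and invoke Lemma \ref{lem:lowerbound}. The only addition is your explicit verification of the closed form of $\breg_{f_\psi}(y\|x)$, which the paper states without derivation.
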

\begin{proof}
For any $x \in (0,1]$, let $u \in (-\infty,-1]$ be such that $\psi(u)=x$. Let $\phi(u) = -1-u$. Clearly, $\phi(u) \geq 0$ and $\psi(u+\phi(u)) = \psi(-1)=1$. We have $$m(u) = \frac{\psi(u+\phi(u))-\psi(u)}{\phi(u)} = \frac{1+\frac{1}{u}}{-1-u} = \frac{-1}{u} = \psi(u) = x$$
Applying Lemma \ref{lem:lowerbound}, we have the lower-bound for all $0<y\leq 1$:
$$\breg_{f_\psi}(y\|x) = \frac{y}{x}-1-\ln\left( \frac{y}{x} \right) \geq \frac{1}{2} \frac{(x-y)^2}{m(\psi^{-1}(x))} =\frac{1}{2} \frac{(x-y)^2}{x} $$
\end{proof}
\begin{figure}[h]
    \centering
    \includegraphics[width = 0.47\textwidth]{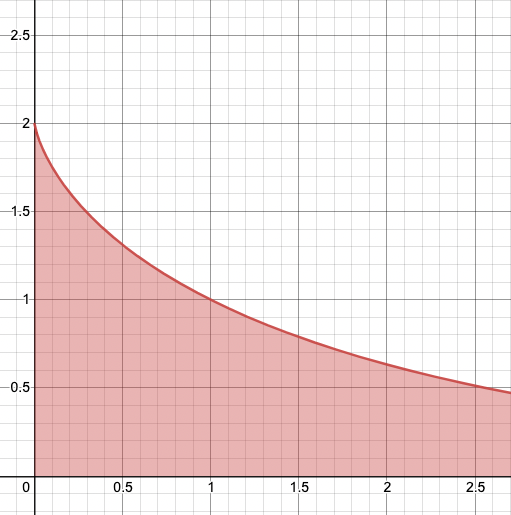}
    \caption{$\frac{y}{x}-1-\ln\left( \frac{y}{x} \right) \geq  \frac{1}{2} \frac{(x-y)^2}{x} $}
    \label{fig:inequality}
\end{figure}
The result of Corollary \ref{cor:LogBarrierLowerbound} is illustrated in Figure \ref{fig:inequality}. The shaded region is  $\lbrace(x,y): x\geq 0, y\geq 0, \frac{y}{x}-1-\ln\left( \frac{y}{x} \right) \geq  \frac{1}{2} \frac{(x-y)^2}{x} \rbrace$. Clearly the region $\{(x,y):0\leq x\leq 1, 0\leq y\leq 1\}$ is within the shaded region.

\section{Full-Information FTRL and AdaFTRL}
\label{subsec:FTRL}

The iterates of FTRL with the regularizer $F_\psi(x) = \sum_{i=1}^n [f_\psi(x(i)) - f_\psi(1/n)]$ for some potential function $\psi$ and positive learning rates $\{\eta_t\}_{t=0}^T$, are of the form:
$$p_{t+1} = \arg \min_{q\in \Delta_n} \left[ F_\psi(q)+\eta_t \sum_{s=1}^t l_s^\top q \right]$$
Since $F_\psi$ is Legendre, the point $p_{t+1}$ always exists strictly inside $\Delta_n$. 
\cite{DBLP:journals/corr/abs-1912-13213} and \cite{DBLP:conf/alt/JoulaniGS17, DBLP:journals/tcs/JoulaniGS20} provide general purpose regret analysis of FTRL. For the sake of completeness, we show a simple way of analyzing FTRL when the  action set is $\Delta_n$ and the regularizer chosen is of the form $F_\psi(x) = \sum_{i=1}^n [f_\psi(x(i)) - f_\psi(1/n)]$ in Appendix \ref{app:ftrl_regret}.

The AdaFTRL strategy picks a specific sequence of learning rate $\eta_t$ based on the history $H_t$. This strategy was analyzed in \cite{DBLP:journals/tcs/OrabonaP18} and a simpler analysis was given by \cite{koolen_2016}. Our analysis is adapted from \citet[Section E.2.1]{hadiji2020adaptation}. We consider the adaptive learning rate: $$\eta_t = \frac{\alpha}{\beta + \sum_{s=1}^t M_s(\eta_{s-1})}$$
Where  $M_t(\eta) = \sup_{q \in \Delta_n} \left[ l_t^\top(p_t-q) -  \frac{1}{\eta} \breg_{F_\psi}(q\|p_t)\right]$, is the \textit{Mixability Gap} and $\alpha,\beta>0$. Since $q = p_t$ is a feasible solution for this optimization problem, we have $M_t(\eta) \geq 0$. Let $p_t^\star$ be the optimal value of $q$ in the optimization. We have the upper bound $$M_t(\eta) = l_t^\top(p_t-p_t^\star) - \frac{1}{\eta}\breg_{F_\psi}(p_t^\star\|p_{t})  \leq l_t^\top(p_t-p_t^\star)  \leq 2 \|l_t\|_\infty$$ Since $M_t(\eta)$ are non-negative and bounded, the sequence $\eta_t$ is non-increasing.

\begin{theorem}
\label{thm:LogBarrierRegret}
If the regularizer is the log-barrier $F_\psi(x) = \sum_{i=1}^n [\log(1/n)-\log(x(i))]$ then for any $i\in [n]$, $\epsilon \in (0,1]$ and any sequence of losses $l_1,\dots,l_T$, the iterates of AdaFTRL satisfy the regret inequality $\sum_{t=1}^T l_t^\top (p_t-\textbf{1}^i_\epsilon) $:
$$\leq n\log(\nicefrac{1}{\epsilon}) \left( \frac{\beta}{\alpha} + \frac{2 \sup_t\|l_t\|_\infty}{\alpha}\right) + 2\sup_t\|l_t\|_\infty +  \sqrt{\sum_{t=1}^T p_t ^\top l_t^2} \left( \frac{n\log(\nicefrac{1}{\epsilon})}{\sqrt{\alpha}}+ \sqrt{\alpha} \right)$$
\end{theorem}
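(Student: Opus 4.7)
The plan is to combine the standard AdaFTRL regret inequality with the new local-norm lower bound of Corollary~\ref{cor:LogBarrierLowerbound}. Specifically, the analysis of FTRL with a non-increasing learning rate (as outlined in Appendix~\ref{app:ftrl_regret}) yields, for any $u \in \Delta_n$,
\[
\sum_{t=1}^T l_t^\top(p_t - u) \;\leq\; \frac{F_\psi(u)}{\eta_T} + \sum_{t=1}^T M_t(\eta_{t-1}).
\]
Choosing $u = \textbf{1}^i_\epsilon$ and writing $S_T := \sum_{t=1}^T M_t(\eta_{t-1})$, the definition $\eta_T = \alpha/(\beta + S_T)$ rearranges the above into
\[
\sum_{t=1}^T l_t^\top(p_t - \textbf{1}^i_\epsilon) \;\leq\; \frac{F_\psi(\textbf{1}^i_\epsilon)\,\beta}{\alpha} + S_T\!\left(\frac{F_\psi(\textbf{1}^i_\epsilon)}{\alpha} + 1\right).
\]
Two sub-tasks remain: (i) bound the log-barrier penalty $F_\psi(\textbf{1}^i_\epsilon)$, and (ii) control the cumulative mixability gap $S_T$.

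Task (i) is a direct computation: $\textbf{1}^i_\epsilon$ has one coordinate of value $1 - \epsilon(n-1)/n \geq 1/n$, whose contribution $\log(1/n) - \log(1 - \epsilon(n-1)/n) \leq 0$ is non-positive, while the remaining $n-1$ coordinates equal $\epsilon/n$ and each contribute $\log(1/\epsilon)$, so $F_\psi(\textbf{1}^i_\epsilon) \leq (n-1)\log(1/\epsilon) \leq n\log(1/\epsilon)$. Task (ii) is where Corollary~\ref{cor:LogBarrierLowerbound} enters: applied coordinatewise to $p_t(i), q(i) \in (0,1]$, it gives
\[
\breg_{F_\psi}(q\|p_t) \;=\; \sum_{i=1}^n \breg_{f_\psi}(q(i)\|p_t(i)) \;\geq\; \frac{1}{2}\sum_{i=1}^n \frac{(q(i)-p_t(i))^2}{p_t(i)}.
\]
Substituting this lower bound into the definition of $M_t(\eta)$ and relaxing $q \in \Delta_n$ to $q \in \mathbb{R}^n$ (which only enlarges the supremum) reduces the problem to an unconstrained quadratic; the maximizer $q(i) - p_t(i) = -\eta\, l_t(i)\, p_t(i)$ then yields the clean local-norm bound $M_t(\eta) \leq \tfrac{\eta}{2}\, p_t^\top l_t^2$.

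Writing $\Delta_t := S_t - S_{t-1}$ and plugging $\eta_{t-1} = \alpha/(\beta + S_{t-1})$ produces the one-step recursion $\Delta_t(\beta + S_{t-1}) \leq \tfrac{\alpha}{2}\, p_t^\top l_t^2$. I would then use the telescoping identity
\[
\sum_{t=1}^T \Delta_t(\beta + S_{t-1}) \;=\; \tfrac{1}{2}\bigl[(\beta + S_T)^2 - \beta^2\bigr] \;-\; \tfrac{1}{2}\sum_{t=1}^T \Delta_t^2,
\]
together with the observation $\Delta_t = M_t(\eta_{t-1}) \leq 2\|l_t\|_\infty \leq 2L$ already noted in the text (hence $\sum_t \Delta_t^2 \leq 2L\, S_T$, with $L := \sup_t \|l_t\|_\infty$), to obtain $(\beta + S_T)^2 \leq \beta^2 + \alpha V + 2L\, S_T$, where $V := \sum_t p_t^\top l_t^2$. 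Completing the square in $\beta + S_T$ and applying subadditivity of $\sqrt{\cdot}$ yield $S_T \leq 2L + \sqrt{\alpha V}$. Feeding this and $F_\psi(\textbf{1}^i_\epsilon) \leq n\log(1/\epsilon)$ back into the first display collects precisely the four terms in the theorem statement.

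The main obstacle is the local-norm bound $M_t(\eta) \leq \tfrac{\eta}{2}\, p_t^\top l_t^2$: classical potential-based analyses obtain this only under a one-sided sign assumption on $l_t$ (such as $l_t \geq 0$ or $l_t \geq -\textbf{1}$), which is incompatible with the importance-weighted estimates of unbounded, arbitrary-sign losses that the scale-free bandit setting will force into the full-information subroutine downstream. Circumventing this restriction is exactly what Corollary~\ref{cor:LogBarrierLowerbound} was built for, and is the reason the log-barrier regularizer is chosen over, e.g., the negative entropy; everything else is standard AdaFTRL/AdaHedge-style recursion manipulation.
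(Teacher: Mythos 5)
Your proposal is correct and follows essentially the same route as the paper: the FTRL inequality with non-increasing $\eta_t$, Corollary~\ref{cor:LogBarrierLowerbound} to get $M_t(\eta)\le\tfrac{\eta}{2}p_t^\top l_t^2$, the bound $F_\psi(\textbf{1}^i_\epsilon)\le n\log(1/\epsilon)$, and the self-bounding recursion for $\sum_t M_t(\eta_{t-1})$. The only difference is cosmetic — you telescope $(\beta+S_t)^2$ directly where the paper (Lemma~\ref{lem:summation_lemma}) expands $\bigl(\sum_t M_t(\eta_{t-1})\bigr)^2$ — and both yield the same bound $S_T\le 2\sup_t\|l_t\|_\infty+\sqrt{\alpha\sum_t p_t^\top l_t^2}$.
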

\begin{proof} The log-barrier regularizer $F_\psi(x) = \sum_{i=1}^n [\log(1/n)-\log(x(i))]$ is obtained by using the potential $\psi(u) = -1/u$ on the domain $(-\infty,0)$.  Using Corollary \ref{cor:LogBarrierLowerbound}, we have the lower-bound:
\begin{align*}
   \breg_{F_\psi}(p_t^\star\|p_{t}) = \sum_{i=1}^n \breg_{f_\psi}(p_t^\star(i)\|p_{t}(i)) \geq \sum_{i=1}^n \frac{1}{2} \frac{(p_t(i)-p^\star_t(i))^2}{p_t(i)} 
\end{align*}
This gives us the upper-bound:
\begin{align*}
    M_t(\eta) &= l_t^\top(p_t-p_t^\star) - \frac{1}{\eta}\breg_{F_\psi}(p_t^\star\|p_{t}) \leq \sum_{i=1}^n \left[ l_t(i)(p_t(i) - p_t^\star(i)) - \frac{(p_t(i)-p^\star_t(i))^2}{2\eta p_t(i)} \right]\\
    &\leq \sum_{i=1}^n \sup_{s\in \mathbb{R}}\left[ l_t(i)s - \frac{1}{2\eta} \frac{s^2}{p_t(i)} \right] \leq \frac{\eta}{2} \sum_{n=1}^n  p_t(i) l_t(i)^2 = \frac{\eta}{2} p_t ^\top l_t^2
\end{align*}
Thus, we have $$\frac{M_t(\eta_{t-1})}{\eta_{t-1}} \leq \frac{1}{2} p_t ^\top l_t^2$$
Applying Theorem \ref{thm:AdaFTRL_regret}(Appendix \ref{app:ftrl_regret}), for any $i \in [n]$ and $\epsilon \in (0,1]$ we have that $\sum_{t=1}^T l_t(p_t-\textbf{1}^i_\epsilon)$:
\begin{align*}
    \leq F_\psi(\textbf{1}^{i}_\epsilon) \left( \frac{\beta}{\alpha} + \frac{2 \sup_t\|l_t\|_\infty}{\alpha}\right) + 2\sup_t\|l_t\|_\infty +  \sqrt{\sum_{t=1}^T p_t ^\top l_t^2} \left( \frac{F_\psi(\textbf{1}^{i}_\epsilon)}{\sqrt{\alpha}}+ \sqrt{\alpha} \right) 
\end{align*}
The term  $F_\psi(\textbf{1}^{i}_\epsilon)$ can be bounded as:
\begin{align*}
    F_\psi(\textbf{1}^{i}_\epsilon) &= n\log(1/n) - (n-1)\log(\epsilon/n) - \log((1-\epsilon) + \epsilon/n)\\
    &\leq n \log(1/n) - n\log(\epsilon/n) = n\log(1/\epsilon)
\end{align*}
\end{proof}

\noindent For $p\in \Delta_n$ and regularizer $F_\psi$, the stability term $\Psi$ is defined as $$\Psi_{p}(l) = \sup_{q \in \Delta_n} \left[  l^\top(p-q) -  \breg_{F_\psi}(q\|p)\right]$$
Observe that $\eta M_t(\eta) = \Psi_{p_t}(\eta l_t)$. For the log-barrier regularizer, we have  $M_t(\eta) \leq \eta p_t^\top l_t^2$. Thus,  $\Psi_p(l) \leq p^\top l^2$ for all $l \in \mathbb{R}^n$. Previously, the only known way to achieve $\Psi_p(l) \leq p^\top l^2$ was by using the negative-entropy regularizer along with the assumption $l \geq -\textbf{1}$ (See \citet[Eq. 6 ]{DBLP:journals/corr/abs-1907-05772} or \citet[Eq. 37.15]{lattimore_szepesvari_2020}).

\section{Scale-free bandit regret bounds}
\label{sec:proof}

\main*
\begin{proof} Suppose the best arm in hindsight is $i_\star = \arg\min_{i \in [n]} \sum_{t=1}^T l_t(i) $. Let $\textbf{1}^{i_\star}$ be the vector with $\textbf{1}^{i_\star}(i_\star)=1$ and $\textbf{1}^{i_\star}(i)=0$ for all $i\neq i_\star$. Let $\textbf{1}^{i_\star}_\epsilon = (1-\epsilon)\textbf{1}^{i_\star} + \epsilon/n$.
The exptected regret of Algorithm \ref{alg:SF_MAB} is:
\begin{align*}
    \mathbb{E}\left[ \sum_{t=1}^T l_t(i_t) - l_t(i^\star)  \right] &= \mathbb{E}\left[ \sum_{t=1}^T l_t^\top (p_t' - \textbf{1}^{i_\star})  \right] = \mathbb{E} \left[ \sum_{t=1}^T l_t^\top (\textbf{1}^{i_\star}_\epsilon - \textbf{1}^{i_\star})  \right] + \mathbb{E} \left[ \sum_{t=1}^T l_t^\top (p_t' - \textbf{1}^{i_\star}_\epsilon)  \right] \\
    &=  \underbrace{\mathbb{E} \left[ \sum_{t=1}^T l_t^\top (\textbf{1}^{i_\star}_\epsilon - \textbf{1}^{i_\star})  \right]}_\textrm{(1)} + \underbrace{\mathbb{E} \left[ \sum_{t=1}^T l_t^\top (p_t - \textbf{1}^{i_\star}_\epsilon)  \right]}_\textrm{(2)} + \underbrace{\mathbb{E} \left[ \sum_{t=1}^T l_t^\top (p_t' - p_t)  \right]}_\textrm{(3)} 
\end{align*}
For term (1), we have:
\begin{align*}
    \mathbb{E} \left[ \sum_{t=1}^T l_t^\top (\textbf{1}^{i_\star}_\epsilon - \textbf{1}^{i_\star})  \right] &= \sum_{t=1}^T l_t^\top (\textbf{1}^{i_\star}_\epsilon - \textbf{1}^{i_\star}) \leq 2\epsilon \left\|\sum_{t=1}^T l_t \right\|_\infty = 2\epsilon S_\infty \label{eq:first_term}
\end{align*}
For term (2), we use the fact that $\mathbb{E}[\tilde{l}_t] = l_t$:
\begin{align*}
    \mathbb{E} \left[ \sum_{t=1}^T l_t^\top (p_t - \textbf{1}^{i_\star}_\epsilon)  \right]  &=\mathbb{E} \left[ \sum_{t=1}^T \tilde{l}_t^\top (p_t - \textbf{1}^{i_\star}_\epsilon)  \right]
\end{align*}
Since Algorithm \ref{alg:SF_MAB} runs log-barrier regularized AdaFTRL with the loss sequence $\tilde{l}_1,\dots, \tilde{l}_T$, we can bound the sum inside the expectation using Theorem \ref{thm:LogBarrierRegret} as $\sum_{t=1}^T \tilde{l}_t^\top (p_t - \textbf{1}^{i_\star}_\epsilon) $:
\begin{align*}
   \leq  \log(\nicefrac{1}{\epsilon}) \left(1 + 2 \sup_t\|\tilde{l}_t\|_\infty \right) + 2\sup_t\|\tilde{l}_t\|_\infty +  \sqrt{n \sum_{t=1}^T p_t ^\top \tilde{l}_t^2} \left( \log(\nicefrac{1}{\epsilon})+ 1 \right) \tag{$*$}
\end{align*}
Consider the term $\sup_t\|\tilde{l}_t\|_\infty$:
\begin{align*}
    \sup_t\|\tilde{l}_t\|_\infty &= \sup_t \frac{|l_t(i_t)|}{p'_t(i_t)} = \sup_t \frac{|l_t(i_t)|}{(1-\gamma_{t-1})p_t(i_t) + \gamma_{t-1}/n}\leq n \sup_t \frac{|l_t(i_t)|}{\gamma_{t-1}}
\end{align*}
Since $\gamma_t$ is a positive, non-increasing sequence:
\begin{align*}
    \sup_t\|\tilde{l}_t\|_\infty &\leq  n  \frac{\sup_t |l_t(i_t)|}{\gamma_{T}} \leq \frac{nL_\infty}{\gamma_T}
\end{align*}
Finally, consider the term $ p_t ^\top \tilde{l}_t^2$:
\begin{align*}
    p_t ^\top \tilde{l}_t^2 & = p_t(i_t) \frac{l_t(i_t)^2}{p'_t(i_t)^2} =  p_t(i_t) \frac{l_t(i_t)^2}{((1-\gamma_{t-1})p_t(i_t)+\frac{\gamma_{t-1}}{n})p'_t(i_t)}
    \leq  \frac{l_t(i_t)^2}{(1-\gamma_{t-1})p'_t(i_t)}
\end{align*}
Since $0\leq \gamma_{t-1}\leq 1/2$, we have $1\leq (1-\gamma_{t-1})^{-1} \leq 2$. Thus:
\begin{align*}
    p_t ^\top \tilde{l}_t^2 &\leq   2\frac{l_t(i_t)^2}{p'_t(i_t)}
\end{align*}
Substituting these bounds in the regret inequality $(*)$, we have $\sum_{t=1}^T \tilde{l}_t^\top (p_t - \textbf{1}^{i_\star}_\epsilon) $:
$$\leq \log(\nicefrac{1}{\epsilon}) + \sqrt{2n \sum_{t=1}^T \frac{l_t(i_t)^2}{p'_t(i_t)}} \left( \log(\nicefrac{1}{\epsilon}) + 1\right) + \frac{2n L_\infty}{\gamma_T} \left( \log(\nicefrac{1}{\epsilon}) + 1\right)$$
Applying expectation, we have $\mathbb{E}\left[ \sum_{t=1}^T \tilde{l}_t^\top (p_t - \textbf{1}^{i_\star}_\epsilon)\right]$:
\begin{align*}
    \leq  \log(\nicefrac{1}{\epsilon}) + \mathbb{E} \left[ \sqrt{2n \sum_{t=1}^T \frac{l_t(i_t)^2}{p'_t(i_t)}}\right] \left( \log(\nicefrac{1}{\epsilon}) + 1\right) + 2n L_\infty \left(\log(\nicefrac{1}{\epsilon}) + 1\right) \mathbb{E} \left[ \frac{1}{\gamma_T}\right]
\end{align*}
For the expectation in the second term, we apply Jensen's inequality:
\begin{align*}
    \mathbb{E} \left[ \sqrt{2n \sum_{t=1}^T \frac{l_t(i_t)^2}{p'_t(i_t)}}\right] &\leq  \sqrt{2n \mathbb{E} \sum_{t=1}^T   \left[\frac{l_t(i_t)^2}{p'_t(i_t)}\right]} = \sqrt{2n \sum_{t=1}^T \sum_{i=1}^n l_t(i)^2} = \sqrt{2nL_2}
\end{align*}
Thus term (2) can be bounded as $\mathbb{E}\left[ \sum_{t=1}^T l_t^\top (p_t - \textbf{1}^{i_\star}_\epsilon)\right]$:
\begin{align*}
    &\leq \log(\nicefrac{1}{\epsilon}) + \sqrt{2nL_2} \left( \log(\nicefrac{1}{\epsilon}) + 1\right) + 2n L_\infty \left(\log(\nicefrac{1}{\epsilon}) + 1\right) \mathbb{E} \left[ \frac{1}{\gamma_T}\right]
\end{align*}

\subsection{Non-Adaptive Exploration}
First, we present a simple way to bound term (3):
\begin{align*}
    \mathbb{E} \left[ \sum_{t=1}^T l_t^\top (p_t' - p_t)  \right] &= \mathbb{E} \left[ \sum_{t=1}^T l_t^\top ((1-\gamma_{t-1})p_t + \gamma_{t-1}/n - p_t)  \right] = \mathbb{E} \left[ \sum_{t=1}^T \gamma_{t-1} l_t^\top (1/n - p_t)  \right]\\
    &\leq  \mathbb{E} \left[ 2 \sum_{t=1}^T \gamma_{t-1} \|l_t\|_\infty  \right] \leq 2 L_\infty \mathbb{E} \left[  \sum_{t=1}^T \gamma_{t-1}   \right]
\end{align*}
Combining the upper-bounds for term (1), (2) and (3), we have $\mathbb{E}\left[ \sum_{t=1}^T l_t(i_t) - l_t(i^\star)  \right]$:
\begin{align*}
    \leq 2\epsilon S_\infty + \log(\nicefrac{1}{\epsilon}) + \sqrt{2nL_2} \left( \log(\nicefrac{1}{\epsilon}) + 1\right) + 2n L_\infty \left(\log(\nicefrac{1}{\epsilon}) + 1\right) \mathbb{E} \left[ \frac{1}{\gamma_T}\right] + 2 L_\infty \mathbb{E} \left[  \sum_{t=1}^T \gamma_{t-1}   \right]
\end{align*}
Pick $\epsilon = (1+S_\infty)^{-1}$ and the exploration rate $\gamma_t = \min(1/2,\sqrt{n/t})$. If $T \geq 4n$, the regret of Algorithm \ref{alg:SF_MAB} with non-adaptive exploration is bounded by:
\begin{align*}
    &\leq 2 + \log(1+S_\infty) + \sqrt{2nL_2} (1 + \log(1+S_\infty)) + 2L_\infty \sqrt{nT} (2 + \log(1+S_\infty))\\
    &\leq \left(2 + \log(1+S_\infty)\right) \left(1 + \sqrt{2nL_2} + 2L_\infty\sqrt{nT}\right)\\
    &=\tilde{\mathcal{O}}(  \sqrt{nL_2} + L_\infty\sqrt{nT} )
\end{align*}

\subsection{Adaptive Exploration}
An alternate way to bound term (3) is:
\begin{align*}
    \mathbb{E} \left[ \sum_{t=1}^T l_t^\top (p_t' - p_t)  \right] &= \mathbb{E} \left[ \sum_{t=1}^T \tilde{l}_t^\top (p_t' - p_t)  \right] = \mathbb{E} \left[ \sum_{t=1}^T \gamma_{t-1} \frac{l_t(i_t)}{p_t(i_t)} (1/n - p_t(i_t))  \right]\\
    &\leq  \mathbb{E} \left[  \sum_{t=1}^T \gamma_{t-1}\frac{|l_t(i_t)|}{p'_t(i_t)}  \right]
\end{align*}
Combining the upper-bounds for term (1), (2) and (3), we have $\mathbb{E}\left[ \sum_{t=1}^T l_t(i_t) - l_t(i^\star)  \right] $:
\begin{align*}
   & \leq 2\epsilon S_\infty + \log(\nicefrac{1}{\epsilon}) + \sqrt{2nL_2} \left( \log(\nicefrac{1}{\epsilon}) + 1\right) + \mathbb{E} \left[\frac{2n L_\infty \left(\log(\nicefrac{1}{\epsilon}) + 1\right)}{\gamma_T} +  \sum_{t=1}^T \gamma_{t-1}\frac{|l_t(i_t)|}{p'_t(i_t)}  \right]
\end{align*}
Consider the expression inside the expectation. Let 
$$\Gamma_t(\gamma) = \frac{ \gamma |l_t(i_t)|}{(1-\gamma)p_t(i_t) + \gamma/n} $$
When $0\leq \gamma \leq 1/2$, we have $0 \leq \Gamma_t(\gamma) \leq n|l_t(i_t)| \leq n L_\infty$. Moreover, we have $$\frac{\Gamma_t(\gamma_{t-1})}{\gamma_{t-1}} = \frac{|l_t(i_t)|}{p'_t(i_t)}$$
Pick $$\gamma_{t} = \frac{n}{2n + \sum_{s=1}^t \Gamma_s(\gamma_{s-1})}$$
We satisfy $0\leq \gamma_t\leq 1/2$. Applying Lemma \ref{lem:summation_lemma}, we have:
\begin{align*}
      &\mathbb{E} \left[  \frac{2n L_\infty \left(\log(\nicefrac{1}{\epsilon}) + 1\right)}{\gamma_T} +  \sum_{t=1}^T \gamma_{t-1}\frac{|l_t(i_t)|}{p'_t(i_t)}  \right] = \mathbb{E} \left[\frac{2n L_\infty \left(\log(\nicefrac{1}{\epsilon}) + 1\right)}{\gamma_T} +  \sum_{t=1}^T \Gamma_t(\gamma_{t-1}) \right] \\
      &\leq 2n L_\infty(2 + L_\infty) \left(\log(\nicefrac{1}{\epsilon}) + 1\right)   + nL_\infty +  \left( 2 L_\infty \left(\log(\nicefrac{1}{\epsilon}) + 1\right) +1 \right) \mathbb{E} \left[\sqrt{2n \sum_{t=1}^T \frac{|l_t(i_t)|}{p'_t(i_t)}}\right] 
\end{align*}
For the expectation above, we apply Jensen's inequality:
\begin{align*}
    \mathbb{E} \left[ \sqrt{2n \sum_{t=1}^T \frac{|l_t(i_t)|}{p'_t(i_t)}}\right] &\leq  \sqrt{2n \mathbb{E} \sum_{t=1}^T   \left[\frac{|l_t(i_t)|}{p'_t(i_t)}\right]} = \sqrt{2 n\sum_{t=1}^T \sum_{i=1}^n |l_t(i)|} = \sqrt{2nL_1}
\end{align*}
Pick $\epsilon = (1+S_\infty)^{-1}$. The regret of Algorithm \ref{alg:SF_MAB} with adaptive exploration is bounded by:
\begin{align*}
    & \leq 2 + \log(1+S_\infty) + \sqrt{2nL_2} \left( \log(1+S_\infty) + 1\right) \\
    & \quad +  2n L_\infty(2 + L_\infty) \left(\log(1+S_\infty) + 1\right)   + nL_\infty +  \left( 2 L_\infty \left(\log(1+S_\infty) + 1\right) +1 \right) \sqrt{2nL_1}\\
    &= \tilde{\mathcal{O}}(\sqrt{nL_2} + L_\infty\sqrt{nL_1})
\end{align*}

\end{proof}

\acks{We thank a bunch of people.}

\bibliography{alt2022-sample}

\appendix

\section{Basic results on potentials}

\label{app:potentials}
Consider a function $g: \mathbb{R}^n \times \mathbb{R} \to \mathbb{R}_+$ defined as $ g(\theta,\lambda) = \sum_{i=1}^n \psi(\theta(i) + \lambda) $ for some potential $\psi$.

\begin{restatable}{Lemma}{uniquelambda}
\label{lem:unique_lambda}
For every $\theta \in \mathbb{R}^n$, there exists a unique $\lambda$ such that $g(\theta, \lambda)=1$
\end{restatable}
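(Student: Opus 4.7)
The plan is to fix $\theta \in \mathbb{R}^n$ and study the single-variable function $h(\lambda) = g(\theta, \lambda) = \sum_{i=1}^n \psi(\theta(i) + \lambda)$ as a function of $\lambda$. I will first identify its effective domain: since $\psi$ is defined on $(-\infty, a)$, each summand requires $\theta(i) + \lambda < a$, so $h$ is defined precisely on the open interval $I = (-\infty, \, a - \max_i \theta(i))$ (with $I = \mathbb{R}$ in case $a = +\infty$). The strategy is then to show that $h$ is continuous and strictly increasing on $I$, with $\lim_{\lambda \downarrow -\infty} h(\lambda) = 0$ and $\lim_{\lambda \uparrow a - \max_i \theta(i)} h(\lambda) = +\infty$, after which uniqueness and existence of $\lambda$ with $h(\lambda) = 1$ follow from strict monotonicity plus the intermediate value theorem.

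Next I would verify each of these properties directly from the definition of a potential. Continuity on $I$ is immediate because each $\psi(\theta(i) + \cdot)$ is continuous (in fact $C^1$) on $I$ and a finite sum of continuous functions is continuous. Strict monotonicity follows because $\psi$ is strictly increasing, so each summand $\psi(\theta(i) + \lambda)$ is strictly increasing in $\lambda$, and summing preserves strict monotonicity. For the limits, as $\lambda \downarrow -\infty$, each argument $\theta(i) + \lambda \to -\infty$, and by the defining property $\lim_{x \to -\infty}\psi(x) = 0$ we get $h(\lambda) \to 0$. As $\lambda$ approaches $a - \max_i \theta(i)$ from below, at least the index $i^\star = \arg\max_i \theta(i)$ gives $\theta(i^\star) + \lambda \to a$, so $\psi(\theta(i^\star) + \lambda) \to +\infty$ by the other defining limit; since the remaining summands stay non-negative, $h(\lambda) \to +\infty$.

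Finally, combining these facts, $h$ is a continuous strictly increasing bijection from $I$ onto $(0, +\infty)$. Since $1 \in (0, +\infty)$, there is exactly one $\lambda \in I$ with $h(\lambda) = 1$, which is precisely the claim.

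I do not expect any real obstacle here. The only subtle point is correctly identifying the domain $I$ and making sure that the approach to $a - \max_i \theta(i)$ is handled by the single dominating summand indexed by $i^\star$; the rest is a direct appeal to the defining properties of a potential. No additional machinery beyond the definition is needed.
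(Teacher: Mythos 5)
Your proof is correct and follows essentially the same route as the paper: continuity and strict monotonicity of $\lambda \mapsto g(\theta,\lambda)$, the limits $0$ and $+\infty$ at the two ends of its domain, and then the intermediate value theorem. You are in fact slightly more careful than the paper, which writes the upper endpoint of the domain as $a-\min_i \theta(i)$, whereas your $a-\max_i\theta(i)$ is the correct boundary (the constraint $\theta(i)+\lambda<a$ must hold for every $i$), so the paper's statement of that limit appears to contain a typo.
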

\begin{proof} For every $\theta \in \mathbb{R}^n$, we have that $\displaystyle \lim_{\lambda \to -\infty} g(\theta,\lambda) = 0$ and $\displaystyle \lim_{\lambda \to a-\min_i(\theta(i))} g(\theta,\lambda) = +\infty$. As $g$ is monotonically increasing and continuous, by the intermediate value theorem, for every $\theta \in \mathbb{R}^n$ there exists a unique $\lambda$ such that $g(\theta, \lambda)=1$.
\end{proof}

Using Lemma \ref{lem:unique_lambda}, we can define a function $\lambda(\theta)$ such that $ g(\theta,\lambda(\theta)) = \sum_{i=1}^n \psi(\theta(i) + \lambda(\theta))=1 $. Since $\psi(\theta(i) + \lambda(\theta)) \geq 0$ and $\sum_{i=1}^n \psi(\theta(i) + \lambda(\theta))=1$, we can see that the vector $\psi(\theta + \lambda(\theta)) \equiv \{\psi(\theta(i) + \lambda(\theta))\}_{i=1}^n \in \Delta_n$ forms a probability distribution.

\bregtransform*

\begin{proof}
Use the fact that $f^\star_\psi(u) = u\psi(u) - f(\psi(u))$.
\begin{align*}
\breg_{f_\psi}(y\|x) &= \breg_{f_\psi}(\psi(v)\|\psi(u)) = f_\psi(\psi(v)) - f_\psi(\psi(u)) - f_\psi'(\psi(u))(\psi(v)-\psi(u))\\
&=v\psi(v) - f^\star_\psi(v) - (u\psi(u) - f^\star_\psi(u)) - u(\psi(v)-\psi(u))\\
&= f^\star_\psi(u) - f^\star_\psi(v) - {f^\star}_\psi'(v)(u-v) = \breg_{f^\star_\psi}(u\|v)
\end{align*}
\end{proof}

\section{A useful summation}

\begin{restatable}{Lemma}{summationLemma}
\label{lem:summation_lemma}
Let $A>0$ and $0\leq M_t(a) \leq L$ for all $t=1,\dots T$ and $a \in \mathcal{A} \subseteq(0,\infty)$. Consider the expression $$\frac{A}{a_T} + \sum_{t=1}^T M_t(a_{t-1})$$ Where  $$a_t = \frac{\alpha}{\beta + \sum_{s=1}^t M_s(a_{s-1})}$$
Constants $\alpha,\beta > 0$ are chosen such that $a_t \in \mathcal{A}$. If $\frac{M_t(a_{t-1})}{a_{t-1}}\leq g_t$, then we have the upper bound:
$$\frac{A}{a_T} + \sum_{t=1}^T M_t(a_{t-1}) \leq A  \left(\frac{\beta}{\alpha} + \frac{L}{\alpha}\right) + L +  \sqrt{2 \sum_{t=1}^T g_t} \left( \frac{A}{\sqrt{\alpha}} + \sqrt{\alpha} \right) $$
\end{restatable}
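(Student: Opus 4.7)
}

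The plan is to reduce the statement to a telescoping identity on the running sum and then solve a quadratic inequality. Let $S_t = \sum_{s=1}^t M_s(a_{s-1})$ with $S_0=0$, and set $B_t = \beta + S_t$, so that by definition $a_t = \alpha / B_t$. The quantity we want to bound rewrites cleanly as
\begin{equation*}
\frac{A}{a_T} + \sum_{t=1}^T M_t(a_{t-1}) \;=\; \frac{A B_T}{\alpha} + S_T \;=\; \frac{A\beta}{\alpha} + \Bigl(1 + \frac{A}{\alpha}\Bigr) S_T,
\end{equation*}
so the whole task collapses to producing an upper bound on $S_T$ of the form $S_T \le L + \sqrt{2\alpha \sum_t g_t}$.

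To get such a bound, I would exploit the hypothesis $M_t(a_{t-1})/a_{t-1} \le g_t$ in the form $M_t(a_{t-1}) \le g_t a_{t-1} = g_t \alpha / B_{t-1}$, i.e.
\begin{equation*}
B_{t-1}(B_t - B_{t-1}) \;\le\; \alpha g_t.
\end{equation*}
The key algebraic identity is $B_{t-1}(B_t-B_{t-1}) = \tfrac12\bigl(B_t^2 - B_{t-1}^2 - (B_t-B_{t-1})^2\bigr)$. Summing from $t=1$ to $T$ telescopes the first two terms, yielding
\begin{equation*}
B_T^2 - \beta^2 \;\le\; 2\alpha \sum_{t=1}^T g_t + \sum_{t=1}^T M_t(a_{t-1})^2.
\end{equation*}
Then I would use $M_t(a_{t-1}) \le L$ to bound $\sum_t M_t(a_{t-1})^2 \le L \sum_t M_t(a_{t-1}) = L(B_T - \beta)$, which turns the display into the quadratic inequality $B_T^2 - L B_T \le \beta^2 - L\beta + 2\alpha \sum_t g_t$, equivalently $(B_T - L/2)^2 \le (\beta - L/2)^2 + 2\alpha \sum_t g_t$.

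Taking square roots and using $\sqrt{x+y}\le \sqrt{x}+\sqrt{y}$ gives $B_T \le L/2 + |\beta - L/2| + \sqrt{2\alpha \sum_t g_t}$, and hence in both cases $\beta \gtrless L/2$ one obtains $S_T = B_T - \beta \le L + \sqrt{2\alpha \sum_t g_t}$. Plugging this back into the initial rewriting yields
\begin{equation*}
\frac{A}{a_T} + \sum_{t=1}^T M_t(a_{t-1}) \;\le\; \frac{A\beta}{\alpha} + \Bigl(1 + \frac{A}{\alpha}\Bigr)\Bigl(L + \sqrt{2\alpha \textstyle\sum_t g_t}\Bigr),
\end{equation*}
which after distributing becomes exactly $A(\beta/\alpha + L/\alpha) + L + \sqrt{2\sum_t g_t}(A/\sqrt{\alpha} + \sqrt{\alpha})$, matching the claim. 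The main subtlety is the $\sum_t M_t^2$ correction produced by the algebraic identity; without the uniform bound $M_t \le L$ the quadratic inequality would not close, and this is the place where the hypothesis $M_t(a) \le L$ plays its essential role. The rest is purely algebraic bookkeeping.
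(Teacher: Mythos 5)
Your proof is correct and is essentially the paper's argument in a shifted parametrization: writing $B_t=\beta+S_t$ and telescoping $B_t^2-B_{t-1}^2=2B_{t-1}M_t+M_t^2$ is exactly the paper's expansion of $\bigl(\sum_t M_t(a_{t-1})\bigr)^2$ into diagonal and cross terms, with the cross terms controlled via $a_{t-1}=\alpha/B_{t-1}$ and the diagonal via $M_t\le L$, followed by solving the same quadratic inequality (you complete the square where the paper invokes $x^2\le a+bx\Rightarrow x\le\sqrt{a}+b$). The only cosmetic cost of your version is the $|\beta-L/2|$ case split, which the paper avoids by working with $S_T$ directly and simply dropping the $-\beta$ term.
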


\begin{proof}
Substituting for $a_T$ in the above expression, we have:
$$\frac{A}{a_T} + \sum_{t=1}^T M_t(a_{t-1}) = \frac{A \beta}{\alpha} + \left( \frac{A}{\alpha} + 1 \right) \sum_{t=1}^T M_t(a_{t-1})$$

Consider $\left(\sum_{t=1}^T M_t(a_{t-1})\right)^2$

\begin{align*}
    \left(\sum_{t=1}^T M_t(a_{t-1})\right)^2 &= \sum_{t=1}^T  M_t(a_{t-1})^2 + 2\sum_{t=1}^T M_t(a_{t-1})  \sum_{s=1}^{t-1} M_{s}(a_{s-1})\\
    &= \sum_{t=1}^T  M_t(a_{t-1})^2 + 2\sum_{t=1}^T M_t(a_{t-1})  \left(\frac{\alpha}{a_{t-1}} -\beta \right)\\
    &\leq  \sum_{t=1}^T  M_t(a_{t-1})^2 + 2\alpha \sum_{t=1}^T   \frac{M_t(a_{t-1})}{a_{t-1}}\\
    &\leq L \sum_{t=1}^T  M_t(a_{t-1}) + 2\alpha \sum_{t=1}^T   g_t
\end{align*}

Using the fact that $x^2\leq a+bx$ implies that $x \leq \sqrt{a}+b$ for all $a,b,x \geq 0$, we have:

$$\sum_{t=1}^T M_t(a_{t-1}) \leq \sqrt{2\alpha \sum_{t=1}^T g_t} + L$$

Thus, we get:
\begin{align*}
    \frac{A}{a_T} + \sum_{t=1}^T M_t(a_{t-1}) &= \frac{A\beta}{\alpha} + \left( \frac{A}{\alpha}+1\right)  \sum_{t=1}^T M_t(a_{t-1})\leq\frac{A\beta}{\alpha} +\left( \frac{A}{\alpha}+1\right)  \left( \sqrt{2\alpha \sum_{t=1}^T g_t} + L \right) \\
    &=A  \left(\frac{\beta}{\alpha} + \frac{L}{\alpha}\right) + L +  \sqrt{2 \sum_{t=1}^T g_t} \left( \frac{A}{\sqrt{\alpha}} + \sqrt{\alpha} \right)
\end{align*}

\end{proof}

\section{FTRL and AdaFTRL regret bound}
\label{app:ftrl_regret}
Recall the FTRL update:
$$p_{t+1} = \arg \min_{q\in \Delta_n} \left[ F_\psi(q)+\eta_t \sum_{s=1}^t l_s^\top q \right]$$
The iterate $p_{t+1}$ can be expressed in a simple closed form using $\psi$. Let $\theta_t = -\eta_t \sum_{s=1}^t l_s$. The Lagrangian of the above optimization problem is $L(q,\alpha) = F_\psi(q)-\theta_t^\top q - \alpha(1-\textbf{1}^\top q)$, where $\textbf{1}$ is the all ones vector. Taking its derivative with respect to $q(i)$ and equating to $0$, we get:
$$\psi^{-1}(q(i)) =  \theta_t(i) + \alpha \implies q(i) = \psi(\theta_t(i) + \alpha)$$ To compute $\alpha$, we use the fact that $\sum_{i=1}^n q(i)=1$ along with Lemma~\ref{lem:unique_lambda} to show that $\alpha = \lambda(\theta_t)$. Thus, $p_{t+1}$ can be written as:
$$p_{t+1} = \psi(\theta_t + \lambda(\theta_t)) \quad \text{ where } \quad \theta_t = -\eta_t \sum_{s=1}^t l_s$$

\noindent We introduce the  Mixed Bregman in order to simplifies our analysis of FTRL.

\begin{definition}[Mixed Bregman]
For $\alpha,\beta>0$ the $(\alpha,\beta)$-Mixed Bregman of function $F$ is:

$$\breg^{\alpha,\beta}_F(x\|y) = \frac{F(x)}{\alpha}-\frac{F(y)}{\beta} - \frac{\nabla F(y)}{\beta}^\top (x-y).$$
\end{definition}

\noindent The Mixed Bregman is not a divergence as $\breg^{\alpha,\beta}_F(x\|x)$ may not be zero. However, we do have the relation $\alpha \breg^{\alpha,\alpha}_F(x\|y) = \breg_F(x\|y)$.

\begin{restatable}{theorem}{oneFTRLregret}
\label{thm:1FTRL_regret}
For any $p \in \Delta_n$ and any sequence of losses $l_1,\dots,l_T$, the iterates of FTRL satisfy the regret equality $\sum_{t=1}^T l_t^\top (p_t-p) $
$$  = \frac{1}{\eta_T}\left[ \breg_{F_\psi}(p\|p_1)-\breg_{F_\psi}(p\|p_{T+1}) \right] + \sum_{t=1}^T \left[ l_t^\top(p_t-p_{t+1}) - \breg^{\eta_t,\eta_{t-1}}_{F_\psi}(p_{t+1}\|p_{t}) \right] $$
Further, if the sequence $\{\eta_t\}_{t=0}^T$ is non-decreasing, we have the regret inequality $\sum_{t=1}^T l_t^\top (p_t-p) $: 
$$  \leq  \frac{F_\psi(p)}{\eta_T} + \sum_{t=1}^T \left[ l_t^\top(p_t-p_{t+1}) - \frac{1}{\eta_{t-1}}\breg_{F_\psi}(p_{t+1}\|p_{t}) \right] $$
\end{restatable}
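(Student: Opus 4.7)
The plan is to establish the equality first by a telescoping identity and then obtain the inequality by dropping one non-negative Bregman term and invoking a pointwise comparison between the mixed and ordinary Bregman. I set $\Phi_t(q) = F_\psi(q)/\eta_t + \tilde{L}_t^\top q$ where $\tilde{L}_t = \sum_{s=1}^t l_s$, so the update is $p_{t+1} = \arg\min_{q \in \Delta_n} \Phi_t(q)$. Since $F_\psi$ is Legendre, $p_{t+1}$ lies in the relative interior of $\Delta_n$, and first-order optimality gives $(\nabla F_\psi(p_t)/\eta_{t-1} + \tilde{L}_{t-1})^\top (q - p_t) = 0$ for every $q \in \Delta_n$; the analogous relation holds at $p_{T+1}$, and at $t=1$ it reduces to $\nabla F_\psi(p_1) \perp \Delta_n$.

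The key identity to target is
\[
l_t^\top(p_t - p_{t+1}) - \breg^{\eta_t,\eta_{t-1}}_{F_\psi}(p_{t+1}\|p_t) \;=\; l_t^\top p_t + \Phi_{t-1}(p_t) - \Phi_t(p_{t+1}),
\]
obtained by expanding the mixed Bregman and substituting $\nabla F_\psi(p_t)^\top (p_{t+1}-p_t)/\eta_{t-1} = -\tilde{L}_{t-1}^\top(p_{t+1}-p_t)$. Summing over $t$ makes the $\Phi_{t-1}(p_t) - \Phi_t(p_{t+1})$ terms telescope to $\Phi_0(p_1) - \Phi_T(p_{T+1})$. Now $\Phi_0(p_1) = 0$ since $p_1 = (1/n,\dots,1/n)$ minimizes $F_\psi$ on $\Delta_n$ with value $0$, while $\Phi_T(p_{T+1}) = F_\psi(p_{T+1})/\eta_T + \tilde{L}_T^\top p_{T+1}$. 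Combining with the regret decomposition $\sum_t l_t^\top(p_t-p) = \sum_t l_t^\top p_t - \tilde{L}_T^\top p$ leaves exactly $F_\psi(p_{T+1})/\eta_T + \tilde{L}_T^\top(p_{T+1} - p)$ after cancellation.

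To identify this residual with $\eta_T^{-1}[\breg_{F_\psi}(p\|p_1) - \breg_{F_\psi}(p\|p_{T+1})]$, I would use $\breg_{F_\psi}(p\|p_1) = F_\psi(p)$ (which follows from $F_\psi(p_1)=0$ and $\nabla F_\psi(p_1) \perp \Delta_n$) and replace $\nabla F_\psi(p_{T+1})^\top(p-p_{T+1})$ by $-\eta_T \tilde{L}_T^\top(p-p_{T+1})$ using optimality at step $T+1$. Expanding both Bregmans then matches the two expressions term by term. For the inequality, I would drop the non-negative term $\breg_{F_\psi}(p\|p_{T+1})/\eta_T$ and, for each summand, apply the one-line identity
\[
\breg^{\eta_t,\eta_{t-1}}_{F_\psi}(p_{t+1}\|p_t) - \tfrac{1}{\eta_{t-1}}\breg_{F_\psi}(p_{t+1}\|p_t) \;=\; F_\psi(p_{t+1})\bigl(\tfrac{1}{\eta_t} - \tfrac{1}{\eta_{t-1}}\bigr),
\]
whose sign is controlled by the stated monotonicity of $\{\eta_t\}$ together with $F_\psi(p_{t+1}) \geq 0$, providing the pointwise comparison required to replace the mixed Bregman by the ordinary one in the stated direction.

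The main obstacle is the bookkeeping of the mixed Bregman inside the telescoping argument: because $\eta_t$ and $\eta_{t-1}$ appear asymmetrically, the optimality condition for $p_t$ must be invoked at precisely the step that converts the residual gradient term $\nabla F_\psi(p_t)^\top(p_{t+1}-p_t)/\eta_{t-1}$ into $-\tilde{L}_{t-1}^\top(p_{t+1}-p_t)$; only then do the partial sums of $\Phi_{t-1}(p_t) - \Phi_t(p_{t+1})$ collapse into the clean boundary expression $\Phi_0(p_1) - \Phi_T(p_{T+1})$. After the telescope, everything reduces to a direct identification of the boundary terms with $\breg_{F_\psi}(p\|p_1)$ and $\breg_{F_\psi}(p\|p_{T+1})$ via the same Legendre optimality conditions, followed by elementary sign-tracking.
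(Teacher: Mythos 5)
Your proof is correct, and it reaches the stated equality by a genuinely different telescope than the paper's. You telescope the FTRL objective values $\Phi_{t-1}(p_t) - \Phi_t(p_{t+1})$ (a ``strong FTRL lemma''--style argument), with the comparator $p$ entering only at the very end through $\sum_t l_t^\top p = \tilde{L}_T^\top p$, and you then identify the boundary residual $F_\psi(p_{T+1})/\eta_T + \tilde{L}_T^\top(p_{T+1}-p)$ with $\frac{1}{\eta_T}\left[\breg_{F_\psi}(p\|p_1) - \breg_{F_\psi}(p\|p_{T+1})\right]$ via the optimality conditions at $p_1$ and $p_{T+1}$. The paper instead splits $l_t^\top(p_t-p) = l_t^\top(p_{t+1}-p) + l_t^\top(p_t-p_{t+1})$, writes $l_t = \theta_{t-1}/\eta_{t-1} - \theta_t/\eta_t$ in terms of the dual iterates $\theta_t = -\eta_t \sum_{s\le t} l_s$, and invokes a three-point identity expressing the gradient-difference term as $\breg^{\alpha,\eta_{t-1}}_{F_\psi}(p\|p_t) - \breg^{\alpha,\eta_t}_{F_\psi}(p\|p_{t+1}) - \breg^{\eta_t,\eta_{t-1}}_{F_\psi}(p_{t+1}\|p_t)$, which telescopes in the mixed Bregman divergences to the comparator. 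Both routes rest on exactly the same two facts --- that $\nabla\Phi_{t-1}(p_t)$ is a constant vector (so its inner product with any simplex difference vanishes) and that $F_\psi(p_1)=0$ with $\nabla F_\psi(p_1)$ constant --- and your per-step identity is algebraically equivalent to the paper's three-point identity after summation. Yours avoids introducing the free parameter $\alpha$ and keeps $p$ out of the telescope, which is arguably cleaner; the paper's version makes the final Bregman boundary terms appear automatically rather than requiring a separate identification step. Your handling of the inequality (drop $\breg_{F_\psi}(p\|p_{T+1})\ge 0$, then compare mixed and ordinary Bregman via $F_\psi(p_{t+1})(1/\eta_t - 1/\eta_{t-1})$) coincides with the paper's.

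One caution: the difference $F_\psi(p_{t+1})\bigl(\tfrac{1}{\eta_t}-\tfrac{1}{\eta_{t-1}}\bigr)$ is nonnegative precisely when $\eta_t \le \eta_{t-1}$, i.e.\ when the learning rates are \emph{non-increasing}. That is the hypothesis the paper's own proof uses (and the one AdaFTRL satisfies); the word ``non-decreasing'' in the theorem statement is a typo. Since you appeal to ``the stated monotonicity,'' make sure you are invoking the non-increasing direction, otherwise the sign of the comparison flips.
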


\begin{proof}
Note that $\nabla F_\psi(p_{t+1}) = \psi^{-1}(p_{t+1}) = \theta_t+\lambda(\theta_t)$. We also have that $l_t = \frac{\theta_{t-1}}{\eta_{t-1}}-\frac{\theta_t}{\eta_t}$. For any $p\in \Delta_n$, we have $l_t^\top(p_t-p) $:
\begin{align*}
&= l_t^\top(p_{t+1}-p) + l_t^\top(p_t-p_{t+1})= \left(\frac{\theta_{t-1}}{\eta_{t-1}}-\frac{\theta_t}{\eta_t}\right)^\top(p_{t+1}-p) + l_t^\top(p_t-p_{t+1})\\
&= \left(\frac{\nabla F_\psi(p_t)-\lambda(\theta_{t-1})}{\eta_{t-1}} - \frac{\nabla F_\psi(p_{t+1})-\lambda(\theta_t)}{\eta_t}\right)^\top(p_{t+1}-p) + l_t^\top(p_t-p_{t+1})\\
&= \left(\frac{\nabla F_\psi(p_t)}{\eta_{t-1}} - \frac{\nabla F_\psi(p_{t+1})}{\eta_t}\right)^\top(p_{t+1}-p) + \left(\frac{\lambda(\theta_t)}{\eta_t} - \frac{\lambda(\theta_{t-1})}{\eta_{t-1}}\right)^\top(p_{t+1}-p) + l_t^\top(p_t-p_{t+1})\\
&= \left(\frac{\nabla F_\psi(p_t)}{\eta_{t-1}} - \frac{\nabla F_\psi(p_{t+1})}{\eta_t}\right)^\top(p_{t+1}-p)  + l_t^\top(p_t-p_{t+1})
\end{align*}
Note that $\frac{\lambda(\theta_t)}{\eta_t} - \frac{\lambda(\theta_{t-1})}{\eta_{t-1}}$ is a constant vector. So, $\left(\frac{\lambda(\theta_t)}{\eta_t} - \frac{\lambda(\theta_{t-1})}{\eta_{t-1}}\right)^\top(p_{t+1}-p) = 0$.
Let $\alpha$ be any number. Observe that:
$$ \left(\frac{\nabla F_\psi(p_t)}{\eta_{t-1}} - \frac{\nabla F_\psi(p_{t+1})}{\eta_t}\right)^\top(p_{t+1}-p)  = \breg^{\alpha,\eta_{t-1}}_{F_\psi}(p\|p_t) - \breg^{\alpha,\eta_{t}}_{F_\psi}(p\|p_{t+1})- \breg^{\eta_t,\eta_{t-1}}_{F_\psi}(p_{t+1}\|p_t)$$
Taking summation over $t$, we have $\sum_{t=1}^T l_t^\top(p_t-p)$:
\begin{align*}
 &= \sum_{t=1}^T \left[\breg_{F_\psi}^{\alpha,\eta_{t-1}}(p\|p_t) - \breg_{F_\psi}^{\alpha,\eta_{t}}(p\|p_{t+1})\right] + \sum_{t=1}^T \left[ l_t^\top(p_t-p_{t+1}) -  \breg_{F_\psi}^{\eta_t,\eta_{t-1}}(p_{t+1}\|p_t)\right]\\
&=\breg_{F_\psi}^{\alpha,\eta_{0}}(p\|p_1) - \breg_{F_\psi}^{\alpha,\eta_{T}}(p\|p_{T+1}) + \sum_{t=1}^T \left[ l_t^\top(p_t-p_{t+1}) -  \breg_{F_\psi}^{\eta_t,\eta_{t-1}}(p_{t+1}\|p_t)\right]
\end{align*}
Since $p_1=(1/n,\dots,1/n)$, we have $F_\psi(p_1)=0$ and $\nabla F_\psi(p_1)$ is a constant vector. We see that $\nabla F_\psi(p_1)^\top(p-p_1)=0$, so the first term is:
\begin{align*}
    \breg_{F_\psi}^{\alpha,\eta_{0}}(p\|p_1) - \breg_{F_\psi}^{\alpha,\eta_{T}}(p\|p_{T+1}) &= \frac{F_\psi(p_{T+1})}{\eta_T} + \frac{\nabla F_\psi(p_{T+1})^\top(p-p_{T+1})}{\eta_T}\\
    & = \frac{1}{\eta_T} \left[ \breg_{F_\psi}(p\|p_1)-\breg_{F_\psi}(p\|p_{T+1}) \right]
\end{align*}
This completes the proof of the first part.\\

\noindent As $F_\psi(p_{t+1})\geq 0$ and $\eta_t$ are non-increasing we have:
\begin{align*}
    &\breg^{\eta_t,\eta_{t-1}}_{F_\psi}(p_{t+1}\|p_{t}) = \frac{F_\psi(p_{t+1})}{\eta_t}-\frac{F_\psi(p_t)}{\eta_{t-1}} - \frac{\nabla F_\psi(p_t)}{\eta_{t-1}}^\top (p_{t+1}-p_t)\\
    &\geq \frac{F_\psi(p_{t+1})}{\eta_{t-1}}-\frac{F_\psi(p_t)}{\eta_{t-1}} - \frac{\nabla F_\psi(p_t)}{\eta_{t-1}}^\top (p_{t+1}-p_t) = \frac{1}{\eta_{t-1}} \breg_{F_\psi}(p_{t+1}\|p_t)
\end{align*}
Thus, we have $\sum_{t=1}^T l_t^\top(p_t-p)$:
\begin{align*}
    &=\frac{1}{\eta_T}\left[ \breg_{F_\psi}(p\|p_1)-\breg_{F_\psi}(p\|p_{T+1}) \right]  + \sum_{t=1}^T \left[ l_t^\top(p_t-p_{t+1}) -  \breg_{F_\psi}^{\eta_t,\eta_{t-1}}(p_{t+1}\|p_t)\right]\\
    &\leq \frac{1}{\eta_T} \breg_{F_\psi}(p\|p_1) + \sum_{t=1}^T \left[ l_t^\top(p_t-p_{t+1}) -  \breg_{F_\psi}^{\eta_t,\eta_{t-1}}(p_{t+1}\|p_t)\right]\\
    &\leq \frac{F_\psi(p)}{\eta_T}   + \sum_{t=1}^T \left[ l_t^\top(p_t-p_{t+1}) -  \frac{1}{\eta_{t-1}} \breg_{F_\psi}(p_{t+1}\|p_t)\right]
\end{align*}
This completes the proof.
\end{proof}

\noindent Recall that the AdaFTRL strategy picks learning rate: $$\eta_t = \frac{\alpha}{\beta + \sum_{s=1}^t M_s(\eta_{s-1})}$$
Where  $$M_t(\eta) = \sup_{q \in \Delta_n} \left[ l_t^\top(p_t-q) -  \frac{1}{\eta} \breg_{F_\psi}(q\|p_t)\right]$$

\begin{restatable}{theorem}{AdaFTRLregret}
\label{thm:AdaFTRL_regret}
If $M_t(\eta_{t-1})/\eta_{t-1} \leq g_t$, then for any $p \in \Delta_n$ and any sequence of losses $l_1,\dots,l_T$, the iterates of AdaFTRL satisfy the regret inequality $\sum_{t=1}^T l_t^\top (p_t-p) $

$$  \leq F_\psi(p) \left( \frac{\beta}{\alpha} + \frac{ 2 \sup_{t}\|l_t\|_\infty}{\alpha}\right) +  2 \sup_{t}\|l_t\|_\infty +  \sqrt{2\sum_{t=1}^T g_t} \left( \frac{F_\psi(p)}{\sqrt{\alpha}}+ \sqrt{\alpha} \right)  $$
\end{restatable}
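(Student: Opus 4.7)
\textbf{Proof plan for Theorem \ref{thm:AdaFTRL_regret}.} My plan is to reduce the bound to a direct application of the FTRL regret equality (Theorem \ref{thm:1FTRL_regret}) followed by the elementary summation inequality (Lemma \ref{lem:summation_lemma}). The key observation is that the AdaFTRL learning rate is engineered so that the per-round stability terms appearing in the FTRL regret decomposition are exactly controlled by $M_t(\eta_{t-1})$, which is the same quantity used to define $\eta_t$.

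First, I would check that $\{\eta_t\}_{t \geq 0}$ is non-increasing. Since $q = p_t$ is feasible in the supremum defining $M_t(\eta)$, we have $M_t(\eta) \geq 0$, so the denominator $\beta + \sum_{s=1}^t M_s(\eta_{s-1})$ is non-decreasing in $t$ and $\eta_t$ is non-increasing. This licenses the non-increasing branch of Theorem \ref{thm:1FTRL_regret}, giving
\[
\sum_{t=1}^T l_t^\top(p_t - p) \;\leq\; \frac{F_\psi(p)}{\eta_T} + \sum_{t=1}^T \Bigl[\, l_t^\top(p_t - p_{t+1}) - \tfrac{1}{\eta_{t-1}} \breg_{F_\psi}(p_{t+1}\|p_t) \Bigr].
\]

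Next I would bound each stability term by $M_t(\eta_{t-1})$. By definition $M_t(\eta_{t-1}) = \sup_{q\in\Delta_n}[\,l_t^\top(p_t - q) - \tfrac{1}{\eta_{t-1}}\breg_{F_\psi}(q\|p_t)\,]$, so choosing $q = p_{t+1}$ yields the pointwise inequality $l_t^\top(p_t-p_{t+1}) - \tfrac{1}{\eta_{t-1}}\breg_{F_\psi}(p_{t+1}\|p_t) \leq M_t(\eta_{t-1})$. Consequently
\[
\sum_{t=1}^T l_t^\top(p_t-p) \;\leq\; \frac{F_\psi(p)}{\eta_T} + \sum_{t=1}^T M_t(\eta_{t-1}).
\]

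Finally, I would invoke Lemma \ref{lem:summation_lemma} with $A = F_\psi(p)$, $a_t = \eta_t$, and $L = 2\sup_t\|l_t\|_\infty$. The uniform upper bound $M_t(\eta_{t-1}) \leq 2\sup_t\|l_t\|_\infty$ is already established in the paragraph preceding the theorem (by discarding the non-negative Bregman term and using $\|p_t - p_t^\star\|_1 \leq 2$). The hypothesis $M_t(\eta_{t-1})/\eta_{t-1} \leq g_t$ supplies exactly the $g_t$ that Lemma \ref{lem:summation_lemma} requires. Plugging these into the lemma's conclusion produces the claimed bound term for term.

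The only genuinely delicate step is aligning the AdaFTRL denominator with the abstract $a_t$ in Lemma \ref{lem:summation_lemma}; everything else is an assembly of results already proved. I do not expect any real obstacle, since the learning rate and the stability term were specifically designed so that the sum $\sum_t M_t(\eta_{t-1})$ both appears in the regret and controls $1/\eta_T$, giving a self-balancing cancellation that Lemma \ref{lem:summation_lemma} formalizes via the standard ``$x^2 \leq a + bx \Rightarrow x \leq \sqrt{a} + b$'' trick.
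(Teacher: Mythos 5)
Your proposal is correct and follows essentially the same route as the paper: apply the non-increasing branch of Theorem \ref{thm:1FTRL_regret}, bound each stability term by $M_t(\eta_{t-1})$ via the choice $q=p_{t+1}$ in the supremum, and then invoke Lemma \ref{lem:summation_lemma} with $A=F_\psi(p)$ and $L=2\sup_t\|l_t\|_\infty$. The only point worth noting is that you correctly treat $\{\eta_t\}$ as non-increasing, which is what the proof of Theorem \ref{thm:1FTRL_regret} actually uses despite the word ``non-decreasing'' appearing in its statement.
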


\begin{proof}
When using non-increasing $\eta_t$, the regret of FTRL is bounded by Theorem \ref{thm:1FTRL_regret}:
\begin{align*}
    \sum_{t=1}^T l_t^\top(p_t-p) &\leq \frac{F_\psi(p)}{\eta_T}   + \sum_{t=1}^T \left[ l_t^\top(p_t-p_{t+1}) -  \frac{1}{\eta_{t-1}} \breg_{F_\psi}(p_{t+1}\|p_t)\right]\\
    &\leq  \frac{F_\psi(p)}{\eta_T}   + \sum_{t=1}^T M_t(\eta_{t-1})
\end{align*}
Using the fact that $0\leq M_t(\eta)\leq 2\sup_{t}\|l_t\|_\infty$ and applying Lemma \ref{lem:summation_lemma}, we have $\sum_{t=1}^T l_t^\top (p_t-p) $
$$  \leq F_\psi(p) \left( \frac{\beta}{\alpha} + \frac{ 2 \sup_{t}\|l_t\|_\infty}{\alpha}\right) +  2 \sup_{t}\|l_t\|_\infty +  \sqrt{2\sum_{t=1}^T g_t} \left( \frac{F_\psi(p)}{\sqrt{\alpha}}+ \sqrt{\alpha} \right)  $$
\end{proof}

\end{document}